\crefname{lemma}{Lemma}{Lemmas}
\crefname{fact}{Fact}{Facts}
\crefname{theorem}{Theorem}{Theorems}
\crefname{corollary}{Corollary}{Corollaries}
\crefname{claim}{Claim}{Claims}
\crefname{example}{Example}{Examples}
\crefname{problem}{Problem}{Problems}
\crefname{definition}{Definition}{Definitions}
\crefname{assumption}{Assumption}{Assumptions}
\crefname{subsection}{Subsection}{Subsections}
\crefname{section}{Section}{Sections}
\newtheorem{theorem}{Theorem}[section]
\newtheorem{proposition}[theorem]{Proposition}
\newtheorem{lemma}[theorem]{Lemma}
\newtheorem{corollary}[theorem]{Corollary}
\theoremstyle{definition}
\newtheorem{definition}[theorem]{Definition}
\newtheorem{problem}[theorem]{Problem}
\newtheorem{exercise-easy}[theorem]{Exercise}
\newtheorem{exercise-med}[theorem]{Exercise}
\newtheorem{exercise-hard}[theorem]{Exercise$^\star$}
\newtheorem{claim}[theorem]{Claim}
\DeclareMathOperator*{\argmin}{arg\,min}
\DeclarePairedDelimiterX{\norm}[1]{\lVert}{\rVert}{#1}
\DeclarePairedDelimiterX{\abs}[1]{\lvert}{\rvert}{#1}
\DeclarePairedDelimiterX{\inp}[2]{\langle}{\rangle}{#1, #2}
\DeclarePairedDelimiterX{\infdivx}[2]{(}{)}{%
  #1\;\delimsize\|\;#2%
}
\newcommand{\mc}[1]{\mathcal{#1}}
\newcommand{\mb}[1]{\mathbb{#1}}
\newcommand{\mrm}[1]{\mathrm{#1}}
\newcommand{\R}{\mathbb{R}}
\newcommand{\B}{\mathbb{B}}
\newcommand{\N}{\mathbb{N}}
\renewcommand{\S}{\mathbb{S}}
\renewcommand{\P}{\mathbb{P}}
\newcommand{\E}{\mathbb{E}}
\newcommand{\lprp}[1]{\left(#1\right)}
\newcommand{\lbrb}[1]{\left\{#1\right\}}
\newcommand{\lsrs}[1]{\left[#1\right]}
\newcommand{\wt}[1]{\widetilde{#1}}
\newcommand{\wh}[1]{\widehat{#1}}
\newcommand{\eps}{\varepsilon}
\newcommand{\Ot}{\widetilde{O}}
\newcommand{\median}{\mathrm{Median}}
\newcommand{\diam}{\mathrm{Diam}}
\newcommand{\unif}{\mathrm{Unif}}
\newcommand{\supp}{\mathrm{Supp}}
\def\authornotes{1pt}
\newcommand{\ynote}[1]{\footnote{\color{ForestGreen}Yeshwanth: #1}}
\newcommand{\jnote}[1]{\footnote{\color{Orange}Jelani: #1}}
\newcommand{\ynote}[1]{}
\newcommand{\jnote}[1]{}
\newcommand{\kernEmbedRep}{\widetilde{\Omega} \left( \eps^{-2} \diam (\mathcal{W})^2 \log 1 / \delta \right)}
\newcommand{\lipEmbedRep}{C \eps^{-2} \log^5 (d / \eps) \log 1 / \delta}
\newcommand{\lipRes}{\left(\frac{\eps}{10d}\right)^3}
\newcommand{\lipWid}{\left(10 \sqrt{\log (d / \eps)}\right)^6}
\newcommand{\lipWidLoss}{\left(10 \sqrt{\log (d / \eps)}\right)^3}
\newcommand{\lipSens}{\left(\frac{\eps}{10d}\right)^3}
\newcommand{\lipAcc}{\left(\frac{\eps}{256 \log (d / \eps)}\right)}
\newcommand{\lipNSplits}{32 \log (d / \eps)}
\newcommand{\prodDEDS}{Produce Distance Estimation Data Structure}
\newcommand{\prodDE}{Produce Distance Estimates}
\newcommand{\updateDE}{Update Distance Estimation Data Structure}
\newcommand{\distEstRep}{\wt{\Omega} (\eps^{-2} \log 1 / \delta)}
\newcommand{\distEstSamps}{\wt{\Omega} (\eps^{-2} \log 1 / \delta)}
\DeclareMathOperator{\quant}{Quant}
\DeclareMathOperator*{\diag}{diag}
\DeclareMathOperator*{\Diag}{Diag}
\begin{document}

\title{Uniform Approximations for Randomized Hadamard Transforms with Applications}
\author{Yeshwanth Cherapanamjeri\thanks{UC Berkeley. \texttt{yeshwanth@berkeley.edu}. Supported by a Microsoft Research BAIR Commons Research Grant} \and Jelani Nelson\thanks{UC Berkeley. \texttt{minilek@berkeley.edu}. Supported by NSF award CCF-1951384, ONR grant N00014-18-1-2562, ONR DORECG award N00014-17-1-2127, and a Google Faculty Research Award.}}
\date{}
\maketitle

\begin{abstract}

  Randomized Hadamard Transforms (RHTs) have emerged as a computationally efficient alternative to the use of dense unstructured random matrices across a range of domains in computer science and machine learning. For several applications such as dimensionality reduction and compressed sensing, the theoretical guarantees for methods based on RHTs are comparable to approaches using dense random matrices with i.i.d.\ entries. However, several such applications are in the low-dimensional regime where the number of rows sampled from the matrix is rather small. Prior arguments are not applicable to the high-dimensional regime often found in machine learning applications like kernel approximation. Given an ensemble of RHTs with Gaussian diagonals, $\{M^i\}_{i = 1}^m$, and any $1$-Lipschitz function, $f: \R \to \R$, we prove that the average of $f$ over the entries of $\{M^i v\}_{i = 1}^m$ converges to its expectation uniformly over $\norm{v} \leq 1$ at a rate comparable to that obtained from using truly Gaussian matrices. We use our inequality to then derive improved guarantees for two applications in the high-dimensional regime: 1) kernel approximation and 2) distance estimation. For kernel approximation, we prove the first \emph{uniform} approximation guarantees for random features \cite{rechtrahimi} constructed through RHTs lending theoretical justification to their empirical success \cite{fastfood,orthogonal_random_features} while for distance estimation, our convergence result implies data structures with improved runtime guarantees over previous work by the authors. We believe our general inequality is likely to find use in other applications.
 \end{abstract}

 \thispagestyle{empty}
\setcounter{page}{0}
\newpage

\section{Introduction}
\label{sec:intro}


Randomized linear mappings find ubiquitous application in diverse domains across computer science and machine learning. Representing a linear transformation $f: \R^d \to \R^k$ as a matrix $\Pi \in \R^{k \times d}$ such that $f(x) = \Pi x$, a commonly examined randomized linear mapping is one where the entries of $\Pi$ are drawn i.i.d.\ from a simple distribution; say, a standard normal. Randomized matrices of the previous form have found use as tools for compressed sensing \cite{CandesT05,Donoho04}, dimensionality reduction \cite{im98}, machine learning \cite{rechtrahimi}, and differential privacy \cite{BlockiBDS12}, amongst other areas. However, one downside to the use of such transformations is that they can be slow, as applying the map amounts to dense matrix-vector multiplication.

Randomized Hadamard Transforms (RHTs) have emerged as a versatile alternative to the use of fully random matrices in applications ranging from the construction of fast Johnson-Lindenstrauss transforms \cite{fastjl}, to speeding up iterative recovery methods in compressed sensing \cite{CandesT06,NeedellT09}, designing fast algorithms for approximate regression and low-rank approximation \cite{Sarlos06}, and building faster algorithms for deep learning \cite{performers}; their special structure allowing for faster computation of the mapping. Assuming $d = 2^\ell$ for some $\ell \in \N$, the RHT is defined as follows:
\begin{equation*}
    f(x) = H_d D x \text{ where } H_d = 
    \begin{bmatrix}
        H_{d / 2} & H_{d / 2} \\
        H_{d / 2} & -H_{d / 2}
    \end{bmatrix}
    \text{ with }
    H_1 = [1] \text{ and }
    D_{i,j} \thicksim
    \begin{cases}
        \mc{N}(0, 1) & \text{if } i = j\\
        0 &\text{otherwise}
    \end{cases}.
\end{equation*}
Due to the recursive structure of the Hadamard matrix, the mapping $f(x)$ can be computed in time $O(d\log d)$ as opposed to $O(d^2)$ for a matrix with i.i.d.\ entries. While each row of the matrix is distributed as a standard normal vector, entires in a column are correlated due to the shared randomness from $D$. Despite these correlations, in ``low-dimensional'' applications such as dimensionality reduction and compressed sensing, where a small number of rows are sampled from $f(x)$, prior work has shown that (subsampled) RHTs provide guarantees competitive with the use of Gaussian random matrices. 

However, for ``high-dimensional'' applications frequently found in machine learning where $k$ may be significantly larger than $d$, known guarantees for RHTs are not comparable to those for Gaussian random matrices. As a concrete example, consider the problem of approximating the RBF kernel, defined as $K_{\mrm{RBF}}(x,y) \coloneqq \exp \lbrb{- \frac{\norm{x - y}^2}{2}}$ where $\norm{\cdot}$ denotes the Euclidean norm. In their pioneering work, Rahimi and Recht \cite{rechtrahimi} construct an embedding, $h_g (x)$, of dimension $(d / \eps^2)$ based on Gaussian random matrices such that $\abs{\inp{h_g (x)}{h_g (y)} - K_{\mrm{RBF}}(x,y)} \leq \eps$ for all $x,y$ in a bounded ball. These embeddings have grown to become one of the most widely adopted techniques for scaling up kernel methods and as such have been hugely influential in machine learning, with its impact recognized in NeurIPS 2017 with a Test of Time Award. Due to their widespread use, much effort has been devoted toward improving the computational complexity of these methods with approaches based on RHTs emerging as a popular alternative with comparable empirical performance and significantly faster runtimes \cite{fastfood,orthogonal_random_features}. However, in sharp contrast to the situation for Gaussian matrices, there are no known uniform concentration results for methods based on RHTs, despite their superior computational properties and empirical performance \cite{orthogonal_random_features,unreasonable_effectiveness}. 


Our main result is a uniform concentration inequality on RHTs with the goal of bridging the gap between RHTs and full Gaussian matrices in the ``high-dimensional'' setting where we show that for any Lipschitz function, its average over the entries of the output of a RHT converges uniformly (over inputs to the RHT) to its expectation at a rate comparable to that obtained for full Gaussian matrices. We use our result to establish improved theoretical guarantees for two ``high-dimensional'' problems: kernel approximation and distance estimation, illustrating its broad applicability. We introduce some notation then state our main result as \cref{thm:lip_conc}. Below and in the rest of the paper, $\mc N(0,\sigma^2)$ denotes the Gaussian with mean $0$ and variance $\sigma^2$.
\begin{gather*}
    \{D^j\}_{j = 1}^m \in \R^{d\times d} \text{ s.t } (D^j)_{k, l} \overset{iid}{\thicksim} 
    \begin{cases}
        \mc{N} (0, 1), &\text{if } k = l \\
        0, &\text{otherwise}
    \end{cases} \\
    \forall z \in \R^d: \wt{h} (z) \coloneqq 
    \begin{bmatrix}
        HD^1 z \\
        HD^2 z \\
        \vdots \\
        HD^m z
    \end{bmatrix}, 
    \quad
    \wt{h}_{j,k} (z) = (HD^jz)_k 
    \text{ and }
    \wt{h}^j (z)= HD^j z \tag{RHT} \label{eq:RHT}
\end{gather*}
\begin{theorem}
    \label{thm:lip_conc}
    Let $d \in \mb{N}, \delta, \eps \in (0, 1/2)$ and $f: \R \to \R$ be a $1$-Lipschitz function. Then we have with probability at least $1 - \delta$:
    \begin{equation*}
        \forall z \in \R^d \text{ s.t } \norm{z} \leq 1: \abs*{\frac{1}{md}\cdot \sum_{j = 1}^m \sum_{k = 1}^d f(\wt{h}_{j,k} (z)) - \E_{Z \thicksim \mc{N} (0, \norm{z}^2)} [f(z)]} \leq \eps
    \end{equation*}
    as long as $m \geq \lipEmbedRep$ for some absolute constant $C > 0$.
\end{theorem}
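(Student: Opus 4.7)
First, I observe that the expectation $\mathbb{E}[\hat F(z)]$, where $\hat F(z) := (md)^{-1}\sum_{j,k} f(\widetilde{h}_{j,k}(z))$, equals $\mathbb{E}_{Z \sim \mathcal{N}(0,\|z\|^2)}[f(Z)]$ trivially, since each $(HD^jz)_k$ is marginally $\mathcal{N}(0,\|z\|^2)$. So the whole content of the theorem is a uniform concentration statement, and the plan is a two-layer argument: pointwise Gaussian concentration for fixed $z$, then a multi-scale chaining argument to extend uniformly over $\{z : \|z\| \leq 1\}$ without paying the naive $\exp(d)$ covering penalty.

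For the pointwise bound, I would fix $z$ with $\|z\|\leq 1$ and view $\hat F(z)$ as a function of the single Gaussian vector $(g^1,\ldots,g^m) \in \R^{md}$ where $D^j = \mathrm{diag}(g^j)$. A short computation using $|f'|\leq 1$, Cauchy--Schwarz over $k$, and orthogonality of the columns of $H$ gives $\|\nabla \hat F\|^2 \leq \|z\|^2/m \leq 1/m$, so Gaussian Lipschitz concentration yields $|\hat F(z)-\mathbb{E}\hat F(z)|\leq \eps/2$ with probability $\geq 1-\delta'$ whenever $m \geq C\log(1/\delta')/\eps^2$. The uniform extension is where the work lies: the naive Lipschitz constant of $z\mapsto \hat F(z)$ is of order $\max_j \|g^j\|_\infty = O(\sqrt{\log(md)})$ (via $\|HD^ju\|_2 \leq \sqrt{d}\|g^j\|_\infty \|u\|$), so a single-scale $\eta$-net with $\eta \sim \eps/\sqrt{\log(md)}$ has size $\exp(\Omega(d\log(1/\eta)))$ and gives sample complexity \emph{linear} in $d$, which is too weak.

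To avoid the $d$-factor I would use a chaining / multi-scale net argument, preceded by a truncation step: since Gaussian tails ensure $\max_{j,k}|(HD^jz)_k| \leq K := O(\sqrt{\log(md/\delta)})$ uniformly with high probability, I replace $f$ by its truncation to $[-K,K]$, so the summands are now bounded as well as $1$-Lipschitz. I would then construct a geometric sequence of nets of the unit ball at scales $\eta_\ell = 2^{-\ell}$, and control the process increments $\hat F(z)-\hat F(z')$ at each scale using Bernstein- or Hanson--Wright-style inequalities that exploit boundedness of the truncated $f$ and column-orthogonality of $H$, then sum the level-wise deviations along chains. The main technical obstacle is establishing these level-$\ell$ increment tails with a sub-Gaussian parameter decaying fast enough in $\eta_\ell$ to outpace the growing net sizes --- this is precisely where the gap between an RHT and a true Gaussian matrix must be closed, and where the $\log^5(d/\eps)$ factor accumulates (from the truncation range $K$, the per-level tail constants, the $O(\log(d/\eps))$ chain levels, and the final union bound across levels).
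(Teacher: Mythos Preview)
Your pointwise bound is correct and matches the paper's starting point. The gap is in the uniform extension: standard Euclidean chaining cannot close it, and your proposal does not supply the missing ingredient.

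Concretely, the same Lipschitz computation you did for $\hat F(z)$ applies to $\hat F(z)-\hat F(z')$ and gives that this increment, as a function of the Gaussian randomness, is $\|z-z'\|/\sqrt{m}$-Lipschitz. So the centered increment process is sub-Gaussian with respect to the \emph{rescaled Euclidean metric} $d(z,z')=\|z-z'\|/\sqrt{m}$. Dudley's inequality over the unit ball in this metric gives
\[
\mathbb{E}\sup_{\|z\|\le 1}\bigl|\hat F(z)-\mathbb{E}\hat F(z)\bigr|\ \lesssim\ \int_0^{1/\sqrt m}\sqrt{d\log\tfrac{3}{u\sqrt m}}\,du\ \asymp\ \sqrt{d/m},
\]
which forces $m\gtrsim d/\eps^2$ --- exactly the bound you called ``too weak.'' Multi-scale chaining does not improve on a single-scale net when the increment metric is just a constant multiple of the Euclidean one. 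Truncation and ``Bernstein- or Hanson--Wright-style'' bounds do not change this: Bernstein across the $m$ independent blocks yields the same $\exp(-c m t^2)$ tail you already have, and Hanson--Wright is for quadratic forms, not $f$ applied to linear forms for general Lipschitz $f$. You have correctly located the obstacle (``sub-Gaussian parameter decaying fast enough in $\eta_\ell$'') but not supplied a mechanism to overcome it.

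The paper's mechanism is different from Euclidean chaining. The crucial observation is that the Lipschitz constant of $\hat F(z)$ in the Gaussians is actually $\|z\|_\infty/\sqrt m$, not merely $\|z\|/\sqrt m$. For a vector well-spread over a support $S$ of size $k$ this is $O(1/\sqrt{mk})$, so Gaussian concentration gives failure probability $\exp(-c\,\nu^2 mk)$. A net of such vectors (over all $S$ with $|S|=k$) has size at most $(Cd/\rho)^{k}$, and the $k$'s cancel in the union bound: $m\gtrsim \nu^{-2}\log(d/\rho)$ suffices \emph{independently of $k$}. The paper then decomposes any $z$ into $r\approx\log(d/\eps)$ level sets $S_{(1)},\dots,S_{(r)}$ of coordinates of comparable magnitude, and --- because these pieces share randomness --- stitches them together via a conditional/recursive argument: conditioning on the randomness over $S^{q}=\bigcup_{l\le q}S_{(l)}$, the conditional mean of the empirical average over $S^{q+1}$ is an average of \emph{translated} empirical averages $F_{S^q,\,t-\|z_{S_{(q+1)}}\|y}$, so one must control $F_{S,t}$ uniformly over a grid of shifts $t$ as well. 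This coordinate-level decomposition plus the family of translates $f_t$ is the idea your outline is missing.
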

We pause to make some remarks regarding \cref{thm:lip_conc}. First, note that the number of rows in the linear transformation is $md$, and hence the concentration properties obtained in \cref{thm:lip_conc} are within a small logarithmic factor of those obtained from the use of full Gaussian matrices where $\approx d / \eps^2$ suffice (see \cref{sec:proof_overview} for a standard proof). Secondly, similar results are \emph{not} obtainable when an alternative distribution, $\mc{D}$, is used in place of Gaussians in the diagonal matrices in the definition of the RHT. To see this, consider the case where the $\mc{D}$ is symmetric and observe that the empirical distribution of the entries of $\wt{h} (e_1)$ converge to $\mc{D}$ while the entries of $\wt{h}(\bm{1})$ converge to a Gaussian as a consequence of the central limit theorem. We conclude our discussion with two complementary lower bounds establishing the tightness of \cref{thm:lip_conc}. In \cref{thm:m_lb}, we show that the constraint on the embedding dimension in terms of $m$ is optimal up to log factors by exhibiting a candidate $1$-Lipschitz function requiring $m \geq \eps^{-2} \log d / \delta$ and finally, in \cref{thm:gau_lb}, we show that there exists a $1$-Lipschitz function requiring an embedding dimension of at least $\eps^{-2} d$ for uniform concentration when true Gaussian random matrices are used. Taken together, these results imply that RHTs are \emph{optimally} comparable (in the sense of \cref{thm:lip_conc}) to random Gaussian matrices and that this phenomenon is \emph{not} an artifact of the looseness of either analysis.

To our knowledge, this is the first uniform concentration inequality of this type for RHTs and we anticipate its use beyond the applications illustrated in our work. We now discuss applications of \cref{thm:lip_conc} to two tasks featuring high dimensional embeddings: kernel approximation and distance estimation. For both of these applications, our result yields significant runtime improvements over prior work.

\subsection{Kernel Approximation}
\label{ssec:kern_appx_intro}
Kernel functions drastically increase the ability of machine learning based methods to learn complex functions of the underlying data. Roughly speaking, these techniques allow the use of a user specified ``inner-product'' function, $K(x,y)$, which corresponds to the inner product $\inp{\phi (x)}{\phi (y)}$ for some function $\phi : \R^d \to \mc{H}$ and Hilbert space $\mc{H}$. For instance, consider a simple classification task where the input data consists of pairs, $\lbrb{(x_i,y_i)}_{i = 1}^n \subset \R^d \times \{0, 1\}$, and the goal is learn a classifier predicting the label $y$ on a new input $x$. Kernel methods represent the classifier as a linear combination $q (x) = \sum_{i = 1}^n \alpha_i K(x_i, x) $ where the coefficients $\alpha_i$ are learnt from data. By parameterizing the classifier in this way, Kernel methods can exploit the flexibility offered by the use of high dimensional embeddings without explicitly performing the embedding which may be computationally expensive/infeasible depending on the kernel used. 

One major drawback of kernel functions is that naively evaluating the classifier on even a single input point can potentially incur a runtime of $nd$. One approach to improve this runtime is the use of Random Fourier Features \cite{rechtrahimi}, in which one embeds the data points into a Euclidean space such that inner products of the embeddings roughly correspond to the evaluation of the kernel. For the popular RBF kernel, their embedding is defined as follows where the function $\cos (\cdot)$ is applied elementwise:
\begin{equation*}
    h(x) = \sqrt{\frac{2}{md}} \cdot \cos (\Pi x + b) \text{ where } \Pi \in \R^{k \times d}, b \in \R^{k} \text{ with } \Pi_{i,j} \overset{iid}{\thicksim} \mc{N} (0, 1),\ b_i \overset{iid}{\thicksim} \unif ([0, 2\pi])
\end{equation*}
For fixed $B > 0$ and $k \approx d / \eps^2$, Recht and Rahimi \cite{rechtrahimi} establish the following claim with high probability:
\begin{equation*}
    \forall x, y \in \B (0, B) : \abs{\inp{h(x)}{h(y)} - K_{\mrm{RBF}} (x,y)} \leq \eps.
\end{equation*}
However, there are no proven universal approximation results when the random transformation $\Pi x$ is replaced by a \ref{eq:RHT} despite their empirical success across a range of machine learning applications \cite{orthogonal_random_features,unreasonable_effectiveness,online_learning_with_orthogonal,unifying_orthogonal,performers}. With this context in mind, we present our theorem for the approximation of the RBF kernel:
\begin{theorem}
    \label{thm:kern_approx}
    Let $d \in \N$, $\delta, \eps \in (0, 1 / 2)$ and $\mc{W} \subset \R^d$ be arbitrary. Then, defining:
    \begin{equation*}
        h(x) = \sqrt{\frac{2}{md}} \cdot \cos (\wt{h} (x) + b)
    \end{equation*}
    where $\wt{h} (\cdot)$ is defined in \ref{eq:RHT} and $b_i \overset{iid}{\thicksim} \unif ([0, 2\pi])$, we have:
    \begin{equation*}
        \forall x, y \in \mc{W}: \abs*{\inp{h(x)}{h(y)} - K_{\mrm{RBF}} (x, y)} \leq \eps
    \end{equation*}
    with probability at least $1 - \delta$ as long as $m \geq \kernEmbedRep$.
\end{theorem}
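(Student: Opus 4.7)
The plan is to decompose $\inp{h(x)}{h(y)}$ via the product-to-sum identity into a ``good'' piece that matches $K_{\mrm{RBF}}(x,y)$ in expectation and a ``nuisance'' piece whose expectation over $b$ vanishes. I control the good piece by invoking \cref{thm:lip_conc} with $f=\cos$ (suitably rescaled) and the nuisance piece by conditioning on $\wt h$ and combining Hoeffding with a net argument.

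\textbf{Step 1 (Decomposition).} Using $2\cos\alpha\cos\beta = \cos(\alpha-\beta) + \cos(\alpha+\beta)$ and linearity of $\wt h$:
\begin{equation*}
\inp{h(x)}{h(y)} = \underbrace{\frac{1}{md}\sum_{j,k}\cos\lprp{\wt{h}_{j,k}(x-y)}}_{T_1(x,y)} + \underbrace{\frac{1}{md}\sum_{j,k}\cos\lprp{\wt{h}_{j,k}(x+y)+2b_{j,k}}}_{T_2(x,y)}.
\end{equation*}
Since $\wt{h}_{j,k}(x-y) \sim \mc{N}(0,\norm{x-y}^2)$ we get $\E[T_1] = \exp(-\norm{x-y}^2/2) = K_{\mrm{RBF}}(x,y)$, while $\E_b[T_2] = 0$.

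\textbf{Step 2 (Main term via \cref{thm:lip_conc}).} Set $z := (x-y)/\diam(\mc{W})$, so $\norm{z}\le 1$, and note that $f(t):=\cos(\diam(\mc{W})\cdot t)/\diam(\mc{W})$ is $1$-Lipschitz. Applying \cref{thm:lip_conc} to $f$ with accuracy $\eps/(2\diam(\mc{W}))$ and scaling the resulting bound by $\diam(\mc{W})$ (using linearity of $\wt h$ to identify $\diam(\mc{W})\cdot\wt{h}_{j,k}(z) = \wt{h}_{j,k}(x-y)$) yields, uniformly over $x,y\in\mc{W}$ with probability $\ge 1-\delta/2$, $\abs{T_1(x,y) - K_{\mrm{RBF}}(x,y)} \le \eps/2$. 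The embedding size needed is $m \ge C\diam(\mc{W})^2\eps^{-2}\log^5(d\diam(\mc{W})/\eps)\log(1/\delta) = \wt{O}(\diam(\mc{W})^2\eps^{-2}\log(1/\delta))$, matching the claim.

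\textbf{Step 3 (Nuisance term via net + Hoeffding).} Conditional on $\wt h$, the $md$ summands of $T_2(x,y)$ are independent in $b$, bounded by $1/(md)$, and mean zero, so Hoeffding gives $\P[\abs{T_2(x,y)} > \eps/2\mid \wt h] \le 2\exp(-md\eps^2/8)$ for each fixed pair. To uniformize, note $T_2$ depends on $(x,y)$ only through $w := x+y$, and for fixed $\wt h$, using $\norm{u}_1 \le \sqrt d\norm{u}_2$ and $\norm{Hv}_2 = \sqrt d\norm{v}_2$,
\begin{equation*}
\abs{T_2(w)-T_2(w')} \le \frac{1}{md}\sum_j\norm{HD^j(w-w')}_1 \le L\norm{w-w'},\quad L := \max_{j,k}\abs{D^j_{kk}}.
\end{equation*}
A standard Gaussian maxima bound gives $L = O(\sqrt{\log(md/\delta)})$ with probability $\ge 1-\delta/4$. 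Covering $\mc{W}+\mc{W}$ (of diameter $\le 2\diam(\mc{W})$) by an $\eps/(4L)$-net of size $(O(\diam(\mc{W})L/\eps))^d$ and union-bounding Hoeffding over it requires only $m \ge \Omega(\eps^{-2}\log(\diam(\mc{W})/\eps) + \eps^{-2}\log(1/\delta)/d)$, which is dominated by Step~2. Combining Steps~2 and~3 via the triangle inequality completes the proof.

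\textbf{Main obstacle.} The delicate point is Step~3: the covering of $\mc{W}+\mc{W}$ has size exponential in $d$, so one has to verify that the $\exp(-md\eps^2)$ rate from Hoeffding (not $\exp(-m\eps^2)$) absorbs the $\Theta(d\log(\cdot))$ cost of the union bound, so that no spurious factor of $d$ enters the final embedding dimension and the bound remains dimension-free as required.
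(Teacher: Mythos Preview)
Your proposal is correct and follows essentially the same route as the paper: the same product-to-sum decomposition, \cref{thm:lip_conc} applied to a rescaled cosine for $T_1$, and a Hoeffding-plus-net argument over $\mc{W}+\mc{W}$ (exploiting the independence of $b$ from $\wt h$) for $T_2$. The only cosmetic difference is that for the Lipschitz constant of $T_2$ the paper uses the spectral bound $\norm{\wt h(w)-\wt h(w')}\le 2\sqrt{md}\,\norm{w-w'}$ (\cref{lem:h_spec_bnd}) rather than your $\max_{j,k}\abs{D^j_{kk}}$, but either works.
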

While previous approaches have shown approximation results for \emph{fixed} $(x,y)$ in expectation \cite[Theorem 1]{orthogonal_random_features}, \cref{thm:kern_approx} is the first \emph{uniform} approximation guarantee for RHTs thus providing theoretical justification for their empirical success. While in-expectation guarantees suffice if a classifier has already been trained and test vectors are chosen independently of the classifier, these approximations are often used in tandem with an iterative optimization procedure during training and in deployment, may face test points which are potentially correlated with predictions on previous inputs. In both these scenarios featuring potentially adaptive inputs, in-expectation guarantees break down while uniform guarantees continue to hold. Note that standard approaches such as generating a new random embedding for each step of an optimization procedure or each input query fail as the coefficients of a linear method utilizing these embeddings are \emph{specific} to the embedding and are unlikely to transfer to a new randomly chosen one. While the dependence of the embedding dimension on $\diam(\mc{W})$ are weaker than those obtained for full Gaussian matrices which have logarithmic dependence on $\diam (\mc{W})$, note that the most interesting regime is when $\norm{x - y} \approx \Ot (1)$ as the RBF kernel decays rapidly in $\norm{x - y}$. 

\subsection{Distance Estimation}
\label{ssec:dist_est_intro}

The second application of our result is in the construction of adaptive algorithms for distance estimation. Formally, the distance estimation problem is defined as follows:
\begin{problem}[Distance Estimation]
    \label{prob:dist_est}
    For a known metric, $d(\cdot, \cdot)$, we are given $X = \{x_i\}_{i = 1}^n \subset \R^d$ and $\eps \in (0, 1)$ and we are required to construct a data structure $\mc{D}$, which when given input query $q$, outputs distance estimates $\lbrb{d_i}_{i = 1}^n$ satisfying:
    \begin{equation*}
        (1 - \eps) d(q,x_i) \leq d_i \leq (1 + \eps) d(q,x_i).
    \end{equation*}
\end{problem}
Our goal will be to build a data structure for distance estimation in the adaptive setting where the sequence of queries seen by the data structure are potentially adversarially chosen with knowledge of answers to previous queries and even potentially the instantiation of the data structure. Note however, that the query cannot depend on future randomness that the algorithm may draw in the process of answering it. The construction of adaptive data structures has received much attention in the recent literature \cite{BenEliezerJWY20,adaptiveds,adv_rob_streaming,diffpriv,adversariallln,separating_adv}. For the particular problem of distance estimation, the approach devised in \cite{adaptiveds} achieves nearly optimal space complexity and query time for all $\ell_p$ ``norms'' for $p \in (0, 2)$.

We now focus solely on the Euclidean setting where our results apply and briefly recall the construction from \cite{adaptiveds}. The approach first draws $l$ i.i.d random Gaussian matrices $\{\Pi_i \in \R^{k \times d}\}_{i = 1}^l$ with $l \approx d$ and $k \approx \eps^{-2}$. For each $x_j \in X$, its embedding, $\Pi_i x_j$ is computed for each $\Pi_i$ and stored. When given query, $q$, the data structure samples $p = O(\log n)$ random matrices, $\{\Pi_{i_r}\}_{r = 1}^p$ and outputs $d_i = \median (\lbrb{\norm{\Pi_{i_r} (q - x_i)}}_{r = 1}^p)$. This approach yields nearly optimal querytimes of $\Ot (n / \eps^2)$. Unfortunately, the update and construction times of the data structure are slow. As the matrices, $\lbrb{\Pi_i}_{i = 1}^l$, have no special structure, adding a new point to the data structure takes time $\Ot (d^2)$ and despite the existence of fast methods for matrix multiplication, construction the data structure is slow ($O(d^{\omega})$ for $n = d$ where $\omega$ is the matrix multiplication constant). 

Our result for distance estimation constructs an algorithm for distance estimation in Euclidean space:

\begin{theorem}
    \label{thm:de_thm}
    Let $\eps, \delta \in (0, 1/2)$. Then, there is a data structure for Distance Estimation in Euclidean space which is initialized correctly with probability at least $1 - \delta$ and supports the following operations:
    \begin{enumerate}
        \item Output a correct answer to a possibly adaptively chosen distance estimation query with probability at least $1 - \delta$
        \item Add input $x \in \R^d$ to the database, $X$.
    \end{enumerate}
    Furthermore, the query and update times of the data structure are $\Ot (\eps^{-2} (n + d) \log 1 / \delta)$ and $\Ot (\eps^{-2} d \log 1 / \delta)$ respectively while the data structure is constructed in time $\Ot (\eps^{-2} (nd) \log 1 / \delta)$.
\end{theorem}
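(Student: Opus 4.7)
The plan is to build the data structure around a single Randomized Hadamard Transform: instantiate $\wt h : \R^d \to \R^{md}$ with $m = C \eps^{-2} \log^5(d/\eps) \log(1/\delta)$ blocks --- exactly the size required by \cref{thm:lip_conc} --- and, for each data point $x_i$, precompute and store the full sketch $\wt h(x_i) \in \R^{md}$. One application of $\wt h$ costs $O(md \log d) = \Ot(\eps^{-2} d \log 1/\delta)$, so the initial construction ($n$ applications) and a single update (one application) realize the times claimed in the theorem. The randomness of $\wt h$ is drawn once at initialization and then frozen for the lifetime of the structure.

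To extract distances from the sketch, I would pick a bounded $1$-Lipschitz function $f : \R \to \R$ (for instance, a smooth-clipped or cosine-based function, possibly taken from a small family indexed by scale) such that $F(r) \coloneqq \E_{Z \sim \mc{N}(0, r^2)}[f(Z)]$ is monotone and stably invertible in the relevant range of distances. \cref{thm:lip_conc} then guarantees that, with probability at least $1 - \delta$ over the draw of $\wt h$, the statistic $\frac{1}{md} \sum_{j, k} f(\wt h_{j,k}(z))$ is within $\eps$ of $F(\norm{z})$ \emph{simultaneously} for every $z$ with $\norm{z} \leq 1$; inverting $F$ turns this into a $(1 \pm \eps)$-multiplicative estimate of $\norm{z}$. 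Uniformity in $z$ is exactly what handles adaptivity at the level of the RHT: no matter how an adversary picks $q$ from the view of the data structure, the vector $z = q - x_i$ lies automatically in the ``good'' set.

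A naive evaluation of this estimator is too slow, touching all $md = \Ot(d/\eps^2)$ coordinates of each $\wt h(q - x_i)$ and giving $\Ot(nd/\eps^2)$ per query. To meet the claimed $\Ot(\eps^{-2} (n + d) \log 1/\delta)$ bound, I would, on each query, draw a fresh uniformly random subset $T \subset [m] \times [d]$ of size $k = \Theta(\eps^{-2} \log(n/\delta))$ and replace the full average by $\frac{1}{k} \sum_{(j,l) \in T} f(\wt h_{j,l}(q) - \wt h_{j,l}(x_i))$. Computing $\wt h(q)$ still costs $\Ot(d/\eps^2)$, but each of the $n$ per-point estimates then runs in $O(k) = \Ot(\eps^{-2})$ time, giving the desired total query cost. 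Because $f$ is bounded and $T$ is drawn independently of everything the adversary has observed so far, a Hoeffding bound shows that the subsample average is within $\eps$ of the full average with probability at least $1 - \delta/n$ for each pair $(q, x_i)$, and a union bound over the $n$ data points closes the argument.

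The main obstacles I anticipate are twofold. First, turning additive concentration for a $1$-Lipschitz statistic into a $(1 \pm \eps)$-\emph{multiplicative} distance estimate is delicate: one must either craft $f$ so that $F$ has a well-conditioned inverse across the whole target range of distances, or combine a small family of estimators at logarithmically many scales with a coarse pre-estimate of $\norm{q - x_i}$; this is the likely source of the hidden polylogarithmic factors. Second, while \cref{thm:lip_conc} on its own already rules out any ``bad'' direction $z$ and so trivially handles adaptivity for the full-sketch estimator, one must verify that the per-query fresh subsampling step composes with an adversary that sees all prior answers --- which it does, because $T$ is independent of all prior transcripts, reducing the adaptive analysis to the non-adaptive Hoeffding bound sketched above.
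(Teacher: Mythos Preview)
Your proposal is correct and matches the paper's approach: the same RHT-based sketch with $m=\Ot(\eps^{-2}\log(1/\delta))$ blocks, the same appeal to \cref{thm:lip_conc} for the uniform (hence adaptivity-proof) guarantee on the full sketch, and the same per-query fresh coordinate subsampling of size $\Ot(\eps^{-2}\log(n/\delta))$ analyzed via Hoeffding. The paper's concrete resolution of your first anticipated obstacle is exactly the ``coarse pre-estimate'' route you mention --- an empirical $\Phi(3)$-quantile of the subsampled coordinates is shown (via \cref{thm:lip_conc} applied to two piecewise-linear indicator surrogates) to land in $[2\norm{q-x_i},4\norm{q-x_i}]$, and then the single truncated absolute value $\psi_r(x)=\min(\abs{x},r)$ at that data-dependent scale serves as the bounded $1$-Lipschitz statistic whose subsample mean delivers the multiplicative estimate directly, so no logarithmic ladder of scales is needed.
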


In comparison to \cite[Theorem 4.1]{adaptiveds}, \cref{thm:de_thm} implies a factor $d$ improvement in update and construction times which is significant in high-dimensional applications. Furthermore, \cref{thm:de_thm} yields nearly optimal guarantees as the time taken to construct the data structure is near linear in the size of the data structure and the space complexity was also shown to be optimal in \cite{adaptiveds}.

\paragraph{Organization:} The rest of the paper is organized as follows. We give a brief overview of the proof of \cref{thm:lip_conc} in \cref{sec:proof_overview}. We then present the formal proof in \cref{sec:lip_proof} and describe applications to kernel approximation in \cref{sec:kern_approx_proof} where we prove \cref{thm:kern_approx} and distance estimation in \cref{sec:dist_est} which proves \cref{thm:de_thm}. Finally, \cref{sec:optimality} contains proofs of our lower bounds establishing the optimality of \cref{thm:lip_conc} while \cref{sec:misc} contains standard inequalities and basic technical results used in our proofs.

\paragraph{Notation:} Throughout the paper, $\wt{h}(\cdot)$ denotes the RHT defined in \ref{eq:RHT}. For $x \in \R^d$, we use $\B(x, r)$ to denote the ball of radius $r$ around $x$, $\norm{x}$ and $\norm{x}_\infty$ to denote its Euclidean and infinity norm and $\norm{x}_0$ and $\supp (x)$ will denote the size of its support and its support respectively. When used with a matrix $M \in \R^{p \times q}$, $\norm{M}$ will denote the spectral norm of $M$. For $x \in \R^d$ and a diagonal matrix $B \in R^{d \times d}$, we use $\Diag (x)$ to denote the diagonal matrix, $D$, with the entries of $x$ along the diagonal while $\diag (B)$ denotes the vector consisting of the diagonal entries of $B$ in order. For two sets of identically indexed subsets of $\R^d$, $V_S = \{v_s\}_{s \in S}$ and $U_{S} = \{u_s\}_{s \in S}$, we abuse notation and let $\norm{V_S - U_S} = \sqrt{\sum_{s \in S} \norm{u_s - v_s}^2}$. For a set $\mc{W} \subset \R^d$, $\diam (\mc{W})$ will denote its diameter. A scalar function, $f$, when applied to a vector is applied elementwise. For $x \in \R^d$ and $S \subset[d]$, we let $x_S$ to denote the vector obtained by setting the entries of $x$ not in $S$ to $0$. For $\alpha \in (0, 1)$ and finite $S \subset \R$, we use $\quant_\alpha(S)$ to denote the $\alpha$th quantile of the set and $\phi$ and $\Phi$ will denote the pdf and cdf of a standard Gaussian random variable.
\section{Proof Overview} 
\label{sec:proof_overview}

We now briefly describe the main ideas behind the proof of \cref{thm:lip_conc}. Before we begin, it is instructive to inspect standard methods of establishing similar results for the Gaussian setting and their shortcomings in our scenario. Specifically, for $n$ i.i.d standard Gaussian vectors $\{g_i\}_{i = 1}^n$ (the rows of a Gaussian matrix), our goal will be to establish the following inequality for some absolute constant $C > 0$:
\begin{equation}
    \label{eq:gau_conc}
    Z\lprp{\lbrb{g_i}_{i = 1}^n} \coloneqq \max_{\norm{v} \leq 1} \abs*{\frac{1}{n} \cdot \sum_{i = 1}^n f \lprp{\inp{g_i}{v}} - \E_{g \thicksim \mc{N} (0, \norm{v}^2)} \lsrs{f (g)}} \leq C \lprp{\sqrt{\frac{d}{n}} + \sqrt{\frac{\log 1 / \delta}{n}}}
\end{equation}
with probability at least $1 - \delta$. Note that $n$ here is analogous to $md$ for our RHTs.

We now bound the expectation and concentration terms of $Z$ separately. We start with the mildly more involved concentration term. In particular, we will show that $Z$ is a Lipschitz function of the $g_i$. Let $\{g^\prime_i\}_{i = 1}^n \in \R^d$ be an alternative choice of vectors. We have:
\begin{align*}
    &\abs*{Z(\{g_i\}_{i = 1}^n) - Z(\{g_i^\prime\}_{i = 1}^n)} \\
    &= \abs*{\max_{\norm{v} \leq 1} \abs*{\frac{1}{n} \cdot \sum_{i = 1}^n f \lprp{\inp{g_i}{v}} - \E_{g \thicksim \mc{N} (0, \norm{v}^2)} \lsrs{f (g)}} - \max_{\norm{v} \leq 1} \abs*{\frac{1}{n} \cdot \sum_{i = 1}^n f \lprp{\inp{g_i^\prime}{v}} - \E_{g \thicksim \mc{N} (0, \norm{v}^2)} \lsrs{f (g)}}} \\
    &\leq \max_{\norm{v} \leq 1} \abs*{\frac{1}{n} \cdot \sum_{i = 1}^n f \lprp{\inp{g_i}{v}} - \E_{g \thicksim \mc{N} (0, \norm{v}^2)} \lsrs{f (g)} - \frac{1}{n} \cdot \sum_{i = 1}^n f \lprp{\inp{g_i^\prime}{v}} + \E_{g \thicksim \mc{N} (0, \norm{v}^2)} \lsrs{f (g)}} \\
    &= \max_{\norm{v} \leq 1} \abs*{\frac{1}{n} \cdot \sum_{i = 1}^n (f \lprp{\inp{g_i}{v}} - f \lprp{\inp{g_i^\prime}{v}})} \leq \max_{\norm{v} \leq 1} \frac{1}{n} \cdot \sum_{i = 1}^n \abs*{f \lprp{\inp{g_i}{v}} - f \lprp{\inp{g_i^\prime}{v}}} \\
    &\leq \max_{\norm{v} \leq 1} \frac{1}{n} \cdot \sum_{i = 1}^n \abs*{\inp{g_i}{v}- \inp{g_i^\prime}{v}} \leq \frac{1}{n} \cdot \sum_{i = 1}^n \norm{g_i - g_i^\prime} \leq \frac{1}{\sqrt{n}} \cdot \norm{\{g_i\}_{i = 1}^n - \{g_i^\prime\}_{i = 1}^n}.
\end{align*}
The above display establishes that $Z$ is a $n^{-1/2}$-Lipschitz function of $\{g_i\}_{i = 1}^n$. Hence, we have:
\begin{equation*}
    \abs*{Z(\{g_i\}_{i = 1}^n) - \E [Z(\{g_i\}_{i = 1}^n)]} \leq \sqrt{\frac{2 \log 2 / \delta}{n}}
\end{equation*}
with probability at least $1 - \delta$ by concentration of Lipschitz functions of Gaussians (\cref{thm:tsirelson}). 

Letting $g^\prime_i \thicksim \mc{N}(0, I)$ and $\sigma_i \thicksim \{\pm 1\}$ be mutually independent standard normal vectors and Rademacher random variables respectively, we bound the expectation of $Z$ as follows:
\begin{align*}
    \E [Z] &= \E \lsrs{\max_{\norm{v} \leq 1} \abs*{\frac{1}{n} \cdot \sum_{i = 1}^n f \lprp{\inp{g_i}{v}} - \E_{g \thicksim \mc{N} (0, \norm{v}^2)} \lsrs{f (g)}}} \leq \E_{g_i, g_i^\prime} \lsrs{\max_{\norm{v} \leq 1} \abs*{\frac{1}{n} \cdot \sum_{i = 1}^n f \lprp{\inp{g_i}{v}} - f (\inp{g_i^\prime}{v})}} \\
    &= \E_{g_i, g_i^\prime, \sigma_i} \lsrs{\max_{\norm{v} \leq 1} \abs*{\frac{1}{n} \cdot \sum_{i = 1}^n \sigma_i \lprp{f \lprp{\inp{g_i}{v}} - f (\inp{g_i^\prime}{v})}}} \leq 2 \E_{g_i, \sigma_i} \lsrs{\max_{\norm{v} \leq 1} \abs*{\frac{1}{n} \cdot \sum_{i = 1}^n \sigma_i f \lprp{\inp{g_i}{v}}}} \\
    &\leq 4 \E_{g_i, \sigma_i} \lsrs{\max_{\norm{v} \leq 1} \abs*{\frac{1}{n} \cdot \sum_{i = 1}^n \sigma_i \inp{g_i}{v}}} = 4 \E_{g_i} \lsrs{\norm*{\frac{1}{n} \cdot \sum_{i = 1}^n g_i}} \leq 4 \sqrt{\frac{d}{n}}
\end{align*}
where the second to last inequality follows from Ledoux-Talagrand contraction (\cite[Theorem~4.12]{ledtal}). The previous two displays now yield \cref{eq:gau_conc}. This succinct argument, unfortunately, breaks down when working with RHTs in the place of Gaussians. While the concentration term can be modified to yield a weaker bound with $m$ in the denominator, the expectation term crucially relies on the mutual independence of all the $g_i$ and $g^\prime_i$ which does not hold true for RHTs.

An alternative approach is to use a standard gridding argument. Consider a $\gamma$-net, $\mc{G}$, of $\B(0, 1)$ for some small $\gamma$ (\cref{def:eps_net}). For each $v \in \mc{G}$, we have by noting that $f(\inp{v}{g_i})$ is $\norm{v}$-subGaussian:
\begin{equation*}
    \abs*{\frac{1}{n} \sum_{i = 1}^n f \lprp{\inp{v}{g_i}} - \E_{g \thicksim \mc{N} (0, \norm{v}^2)} [f \lprp{g}]} \leq \sqrt{\frac{2 \log 2 / \delta^\prime}{n}}
\end{equation*}
with probability at least $1 - \delta^\prime$. Setting $\delta^\prime = \delta / \abs{\mc{G}}$ and an application of the union bound yield the desired conclusion on the net as $\mc{G}$ can be chosen to satisfy $\abs{\mc{G}} \leq (C \gamma^{-1})^{d}$ (\cref{lem:cov_bnd}). Unfortunately, this simple argument also fails when working with RHTs. To see this, consider the case $v = e_1$. Here, the previous application of Hoeffding's Inequality yields the weaker inequality:
\begin{equation*}
    \abs*{\frac{1}{md} \sum_{i = 1}^m \sum_{j = 1}^d f \lprp{\wt{h}_{i, j} (v)} - \E_{g \thicksim \mc{N} (0, \norm{v}^2)} [f \lprp{g}]} \leq \sqrt{\frac{2 \log 2 / \delta^\prime}{m}}.
\end{equation*}
A naive union bound would then require $md = \Omega(d^2)$ (whereas our aim is to have $md$ nearly linear in $d$). This is reminiscent of the situation in compressed sensing, in which a naive union bound provides a similarly weak result \cite{CandesT05,RudelsonV08}. In the next subsection, we present our approach to circumvent this issue in our context (which is not related to the chaining technique used in the compressed sensing context).

\subsection{Our Approach}
\label{ssec:our_approach}

The first key observation underlying our approach is that while standard basis vectors lead to sub-optimal tail bounds, a typical vector behaves quite differently. For example, consider the vector $v = \bm{1} / \sqrt{d}$. In this case, it is not hard to show that each entry of $\wt{h} (v)$ is independent due to the orthogonality of the rows of $H_d$. Hence, for this particular vector, we obtain with probability at least $1 - \delta^\prime$:
\begin{equation*}
    \abs*{\frac{1}{md} \sum_{i = 1}^m \sum_{j = 1}^d f (\wt{h}_{i, j} (v)) - \E_{g \thicksim \mc{N} (0, \norm{v}^2)} \lsrs{f(g)}} \leq \sqrt{\frac{2 \log 2 / \delta^\prime}{md}}.
\end{equation*}
Since ``most'' vectors on the unit sphere are typically closer to $v$ than a standard basis vector, one could hope to apply the stronger inequality for most vectors while treating sparse vectors like those in the standard basis separately. Intuitively, our proof establishes the following concentration inequality:
\begin{equation}
    \label{eq:conc_inf_ov}
    \abs*{\frac{1}{md} \sum_{i = 1}^m \sum_{j = 1}^d f (\wt{h}_{i, j} (v)) - \E_{g \thicksim \mc{N} (0, \norm{v}^2)} \lsrs{f(g)}} \leq C \sqrt{\frac{\norm{v}_\infty^2 \log 1 / \delta^\prime}{m}}.
\end{equation}
Observe that the above inequality interpolates between the settings $v = e_1$ and $v = \bm{1} / \sqrt{d}$ depending on how well-spread the input vector is. We now use the inequality to establish our result for a simpler set of vectors.

Consider the following sets 
\begin{gather*}
    \forall S \subseteq [d]: \mc{G}_S := \lbrb{v \in \B(0, 1): \forall i,j \in S\, \frac{1}{2}\abs{v_i} \leq \abs{v_j} \leq 2 \abs{v_i}, \forall i \notin S\, v_i = 0} \\
    \forall k \in [d]: \mc{G}_k := \cup_{\substack{S \subset [d] \\ \abs*{S} = k}} \mc{G}_S  \\
    \mc{G} := \cup_{k \in [d]} \mc{G}_k
\end{gather*}
Hence, $\mc{G}_S$ consists of vectors uniformly spread over $S$ and for any $v \in \mc{G}_S$, we have $\norm{v}_\infty \leq 2 / \sqrt{\abs*{S}}$. We will use \cref{eq:conc_inf_ov} to perform a union bound over $\mc{G}$. First, notice that a $\gamma$-net of $\mc{G}_{S}$ has size $(C / \gamma)^{\abs{S}}$ and hence, there exists a $\gamma$-net of $\mc{G}_k$, $\wt{\mc{G}}_k$ of size at most $(Cd / \gamma)^{k}$. A union bound over only the elements in $\wt{\mc{G}}_k$ yields:
\begin{equation*}
    \forall v \in \wt{\mc{G}}_k: \abs*{\frac{1}{md} \sum_{i = 1}^m \sum_{j = 1}^d f (\wt{h}_{i, j} (v)) - \E_{g \thicksim \mc{N} (0, \norm{v}^2)} \lsrs{f(g)}} \leq C \sqrt{\frac{\norm{v}_\infty^2 \log 1 / \delta + \log (d / \gamma)}{m}}
\end{equation*}
with probability at least $1 - \delta / d$. Ignoring discretization errors, this establishes our concentration result for the restricted set $\mc{G}$. While $\mc{G}$ is quite restricted and this inequality is not strong enough to prove \cref{thm:lip_conc}, the ideas used in establishing it will play a key part in proving the general result.

Our next key observation is that any $v \in \B(0, 1)$ can be well approximated by a linear combination of a small number of vectors from $\mc{G}$; that is, $v \approx \sum_{i = 1}^r v_i$ for $r \approx \log (d / \eps)$ and $v_i \in \mc{G}$ with $\norm{v_i}_0 \leq \norm{v_{i + 1}}_0$. While the previously established inequalities are strong enough to ensure the conclusion of \cref{thm:lip_conc} for the individual components, $v_i$, this does not ensure that their combination enjoys similar concentration properties and it is not clear how these vectors behave when their embeddings are combined. 

The final ingredient in our argument is the stronger conditional inequality for $u = u_1 + u_2$ where $\supp (u_1) \cap \supp (u_2) = \phi$ and $U = \{(D^j)_{k,k}\}_{j \in [m], k \in \supp (u_2)}$:
\begin{equation}
    \label{eq:conc_inf_cond_ov}
    \abs*{\frac{1}{md} \sum_{i = 1}^m \sum_{j = 1}^d f(\wt{h}_{i, j} (u)) - \E \lsrs{\frac{1}{md} \sum_{i = 1}^m \sum_{j = 1}^d f(\wt{h}_{i, j} (u)) \mid U}} \leq C \sqrt{\frac{\norm{u_1}_\infty^2 \log 1 / \delta}{m}}.
\end{equation}
The above inequality shows that once we fix the variables in $U$, the concentration properties of the entries of $\wt{h} (u)$ are solely determined by the properties of $u_1$. This inequality allows us to establish uniformly over $v$:
\begin{equation}
    \label{eq:conc_cond_techniques}
    \forall k \in [r]: \abs*{\frac{1}{md} \sum_{i = 1}^m \sum_{j = 1}^d f (\wt{h}_{i, j} (v^k)) - \E \lsrs{\frac{1}{md} \sum_{i = 1}^m \sum_{j = 1}^d f(\wt{h}_{i, j} (v^k)) \mid V_{k - 1}}} \leq \frac{\eps}{r}
\end{equation}
where $v^k = \sum_{i = 1}^k v_i$ and $V_i = \{(D^j)_{k,k}\}_{j \in [m], k \in \supp (v^i)}$. The final step of our argument involves showing through a careful recursive argument that the conditional expectation is close to its unconditional expectation where we additionally require that \cref{eq:conc_cond_techniques} holds not just for the original function, $f$, but also for offset versions of the function, $f_t$, defined as $f_t(x) = f(x - t)$ for a large range of $t$. We prove \cref{eq:conc_inf_cond_ov} and carry out this argument in full detail in \cref{sec:lip_proof}.
\section{Proof of Uniform Lipschitz Concentration}
\label{sec:lip_proof}

In this section, we formally prove \cref{thm:lip_conc} by expanding on the outline presented in \cref{sec:proof_overview}. We begin by defining the class of functions for which our concentration properties will hold:
\begin{align*}
    \forall S \subseteq [d]: V_S &\coloneqq \bigcup_{j = 1}^m \lbrb{D^j_{i, i}}_{i \in S} \\
    \forall S \subset [d], t \in \R, z \in \R^d : F_{S, t} (z) &\coloneqq \frac{1}{md} \cdot \sum_{j = 1}^m \sum_{k = 1}^d f \lprp{\wt{h}_{j,k}(z_{S}) - t} \\
    \forall S, T \subseteq [d], t \in \R, z \in \R^d: \wt{F}_{S, T, t} (z) &\coloneqq \mb{E}\lsrs{F_{S \cup T, t} (z) \mid V_T} \tag{LIP-NOT} \label{eq:lip_not}
\end{align*}
To help define the nets used in our argument, we introduce the following notation and define:
\begin{gather*}
    \rho \coloneqq \lipRes, \ \lambda \coloneqq \lipWid,\ \gamma \coloneqq \lipWidLoss, \ \nu \coloneqq \lipAcc\\
    \forall S, T \subseteq [d]: \mc{G}_{S, T} \coloneqq \lbrb{z \in \R^d:  \norm{z} \leq 1,\ \forall i \notin S \cup T, z_i = 0,\ \forall i,j \in S, \frac{1}{2} \abs{z_j} \leq \abs{z_i} \leq 2 \abs{z_j}} \\
    r \coloneqq \lipNSplits,\quad \zeta \coloneqq \lipSens,\quad \mc{T} \coloneqq \lbrb{\pm i \cdot \zeta}_{i = 0}^{\lambda / \zeta} \tag{LIP-NET} \label{eq:lip_net}
\end{gather*}
For all disjoint $S, T \subseteq [d]$ such that $\abs{T} \leq r \cdot \abs{S}$, let $\wt{\mc{G}}_{S, T}$ be an $\rho$-net of $\mc{G}_{S,T}$ (\cref{def:eps_net}). Note, we may assume that $\abs{\wt{\mc{G}}_{S,T}} \leq \lprp{\frac{10}{\rho}}^{(r + 1) \cdot \abs{S}}$ (\cref{lem:cov_bnd}). We now have the following claim:
\begin{claim}
    \label{clm:net_part_conc_lip}
    We have:
    \begin{equation*}
        \forall t \in \mc{T}, S,T \text{ s.t } \abs{T} \leq r \cdot \abs{S}, T \cap S = \phi, z \in \wt{\mc{G}}_{S,T}: \abs{F_{S \cup T, t} (z) - \wt{F}_{S, T, t} (z)} \leq \nu
    \end{equation*}
    with probability at least $1 - \delta / 4$.
\end{claim}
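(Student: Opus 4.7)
The plan is to fix a single tuple $(t, S, T, z)$, apply the conditional concentration inequality \cref{eq:conc_inf_cond_ov} from the proof overview to the shifted $1$-Lipschitz function $f_t(x) \coloneqq f(x - t)$, and then take a union bound over all relevant tuples. For a single fixed $(t, S, T, z)$, I would write $z_{S \cup T} = z_S + z_T$, a decomposition into vectors with disjoint supports. Applying \cref{eq:conc_inf_cond_ov} with $u_1 = z_S$, $u_2 = z_T$, and conditioning set $U = V_T$, one obtains
\[
\abs{F_{S \cup T, t}(z) - \wt{F}_{S, T, t}(z)} \leq C \sqrt{\frac{\norm{z_S}_\infty^2 \log(1/\delta')}{m}}
\]
with probability at least $1 - \delta'$. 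The crucial structural input comes from the definition of $\mc{G}_{S, T}$: membership forces the $\abs{S}$ nonzero coordinates of $z_S$ to lie within a factor of two of one another, and combined with $\norm{z} \leq 1$ this yields $\norm{z_S}_\infty^2 \leq 4/\abs{S}$. (When $\abs{S} = 0$, the constraint $\abs{T} \leq r \abs{S}$ forces $T = \emptyset$ and $z = 0$, making the claim trivial.)

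Next I would execute the union bound. There are $\abs{\mc{T}} = O(\lambda/\zeta)$ choices of $t$, at most $\binom{d}{s} \cdot \sum_{\tau \leq rs} \binom{d}{\tau} \leq d^{(r+1) s}$ disjoint pairs $(S, T)$ with $\abs{S} = s$ and $\abs{T} \leq rs$, and at most $(10/\rho)^{(r+1) s}$ points in $\wt{\mc{G}}_{S, T}$. Splitting the failure budget $\delta/4$ across strata indexed by $s$ and uniformly within each stratum, the per-event budget satisfies
\[
\log(1/\delta') = O\!\lprp{\log(1/\delta) + s(r+1)\lprp{\log d + \log(1/\rho)} + \log(\lambda/\zeta)} = O\!\lprp{\log(1/\delta) + s \log^2(d/\eps)},
\]
where the second equality plugs in the choices of $r, \rho, \lambda, \zeta$ from \cref{eq:lip_net}. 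Substituting back using $\norm{z_S}_\infty^2 \leq 4/s$ and $s \geq 1$ leaves
\[
C \sqrt{\frac{\norm{z_S}_\infty^2 \log(1/\delta')}{m}} = O\!\lprp{\sqrt{\frac{\log(1/\delta)}{m}} + \sqrt{\frac{\log^2(d/\eps)}{m}}},
\]
and the assumed lower bound $m \geq C \eps^{-2} \log^5(d/\eps) \log(1/\delta)$ drives both terms below $\nu = \lipAcc$ with substantial room to spare.

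The step I expect to be the main obstacle is precisely this cancellation: the refined $\norm{z_S}_\infty^2 \approx 1/s$ factor supplied by the conditional inequality must exactly offset the factor of $s$ appearing in the exponents of both the $(S, T)$ count and the net cardinality, so that the tail is essentially $s$-free. Without \cref{eq:conc_inf_cond_ov} (for instance, using only a generic sub-Gaussian bound with a $\log(1/\delta')/m$ tail), the naive union bound would require $m = \Omega(d)$ and destroy the main quantitative goal. The remainder of the argument is a careful accounting against the parameter choices in \cref{eq:lip_net}, together with the trivial observation that because we only invoke the per-point inequality at elements of the discrete net $\wt{\mc{G}}_{S, T}$, the $\rho$-discretization itself introduces no further error at this stage.
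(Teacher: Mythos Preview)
Your high-level outline (per-point concentration plus a union bound stratified by $s = \abs{S}$, with the $1/s$ from $\norm{z_S}_\infty^2$ cancelling the $s$ in the net exponents) is exactly the paper's strategy, and your union-bound bookkeeping is correct. However, there is a genuine gap at the first step: you invoke \cref{eq:conc_inf_cond_ov} as a black box, but that inequality is not established anywhere prior to this claim. The overview explicitly says ``We prove \cref{eq:conc_inf_cond_ov} \ldots\ in \cref{sec:lip_proof},'' and in fact the proof of \cref{clm:net_part_conc_lip} \emph{is} where the paper proves it. So citing \cref{eq:conc_inf_cond_ov} here is circular.

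What is missing is the actual derivation of the per-point tail. The paper does this by conditioning on $V_T$ and showing that $F_{S\cup T,t}(z)$, regarded as a function of the Gaussian vector $V_S$, is $\tfrac{2}{\sqrt{m\abs{S}}}$-Lipschitz. The computation chains (i) $1$-Lipschitzness of $f$, (ii) Cauchy--Schwarz to pass from an $\ell_1$ average to an $\ell_2$ norm, (iii) the isometry $\norm{Hx} = \sqrt{d}\,\norm{x}$ to strip off the Hadamard matrix, and (iv) the rewriting $(D^{j,1}-D^{j,2})z = \Diag(z)\,\diag(D^{j,1}-D^{j,2})$ together with your own observation $\norm{z_S}_\infty \leq 2/\sqrt{\abs{S}}$. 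One then applies the Tsirelson--Ibragimov--Sudakov inequality (\cref{thm:tsirelson}) to obtain a sub-Gaussian tail with variance proxy $4/(m\abs{S})$, which is precisely the $\norm{z_S}_\infty^2/m$ dependence you quote. Once you supply this Lipschitz computation, the rest of your argument goes through and matches the paper's.
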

\begin{proof}
    For the proof, we start by conditioning on $V_T$. Note that for all $i \in S$, we must have:
    \begin{equation*}
        \abs{z_i} \leq \frac{2}{\sqrt{\abs{S}}}.
    \end{equation*}
    Let $V^1_S$ and $V^2_S$ be two distinct settings of the random variables $V_S$ and let $F^1_{S, t} (x), F^2_{S, t} (x)$ be the values of $F_{S, t}$ computed by setting the variables in $V_S$ to $V^1_S$ and $V^2_S$ respectively fixing all the rest to be the same. Similarly, let $\wt{h}^1(\cdot), \wt{h}^2(\cdot)$ denote the vectors $\wt{h}$ computed with the corresponding settings of $V_S$ and $D^{j,1}, D^{j,2}$ be the corresponding diagonal matrices for $j \in [m]$. Recalling that $f(\cdot)$ is a $1$-Lipschitz function, we have:
    \begin{align*}
        &\abs{F^1_{S \cup T, t} (z) - F^2_{S \cup T, t} (z)} \leq \frac{1}{md} \cdot \sum_{j = 1}^m \sum_{k = 1}^d \abs{\wt{h}^1_{j,k} (z) - \wt{h}^2_{j,k} (z)} \leq \frac{1}{\sqrt{md}} \cdot \sqrt{\sum_{j = 1}^m \sum_{k = 1}^d (\wt{h}^1_{j,k} (z) - \wt{h}^2_{j,k} (z))^2} \\
        &= \frac{1}{\sqrt{md}} \cdot 
        \norm*{
            \begin{bmatrix}
                H (D^{1, 1} - D^{1, 2}) z \\
                H (D^{2, 1} - D^{2, 2}) z \\
                \vdots \\
                H (D^{m, 1} - D^{m, 2}) z
            \end{bmatrix}
        } = \frac{1}{\sqrt{md}} \cdot 
        \norm*{
            \begin{bmatrix}
                H & 0 & \dots & 0\\
                0 & H & \dots & 0\\
                \vdots & \vdots & \ddots & \vdots\\
                0 & 0 & \dots & H
            \end{bmatrix} \cdot 
            \begin{bmatrix}
                (D^{1,1} - D^{1,2}) z \\
                (D^{2,1} - D^{2,2}) z \\
                \vdots \\
                (D^{m,1} - D^{m,2}) z
            \end{bmatrix}} \\
        &= \frac{1}{\sqrt{m}} \cdot 
        \norm*{
            \begin{bmatrix}
                (D^{1,1} - D^{1,2}) z \\
                (D^{2,1} - D^{2,2}) z \\
                \vdots \\
                (D^{m,1} - D^{m,2}) z
            \end{bmatrix}
        } = 
        \frac{1}{\sqrt{m}} \cdot 
        \norm*{
            \begin{bmatrix}
                \Diag (z) \diag(D^{1,1} - D^{1,2}) \\
                \Diag (z) \diag(D^{2,1} - D^{2,2}) \\
                \vdots \\
                \Diag (z) \diag(D^{m,1} - D^{m,2})
            \end{bmatrix}
        } \leq 2 \cdot \frac{1}{\sqrt{m}} \cdot \frac{1}{\sqrt{\abs{S}}} \cdot \norm{V^1_S - V^2_S}
    \end{align*}
    Therefore, $F_{S \cup T, t} (z)$ is a $\frac{2}{\sqrt{m \abs{S}}}$-Lipschitz function of $V_S$ conditioned on $V_T$. Hence, we have by \cref{thm:tsirelson}:
    \begin{equation*}
        \P \lbrb{\abs{F_{S \cup T, t} (z) - \wt{F}_{S, T, t} (z)} \leq \nu} \geq 1 - \frac{\delta}{32 \cdot (10 / \rho)^{(r + 1)\cdot \abs{S}}\cdot (d + 1)^{(r + 1) \cdot \abs{S}} \cdot \abs{\mc{T}} \cdot d^2}.
    \end{equation*}
    The claim now follows from a union bound over all possible $S, T$ satisfying the constraints.
\end{proof}

From this point on, we condition on the conclusions of \cref{lem:h_spec_bnd,clm:net_part_conc_lip}; i.e we condition on the following event which occurs with probability at least $1 - \delta / 2$ via \cref{clm:net_part_conc_lip,lem:h_spec_bnd}: 
\begin{gather*}
    \forall t \in \mc{T}, S,T \text{ s.t } \abs{T} \leq r \cdot \abs{S}, T \cap S = \phi, z \in \wt{\mc{G}}_{S,T}: \abs{F_{S \cup T, t} (z) - \wt{F}_{S, T, t} (z)} \leq \nu \\
    \forall x, y \in \R^d: \frac{\norm*{\wt{h} (x) - \wt{h} (y)}}{\sqrt{md}} \leq 2 \norm{x - y}
\end{gather*}

We now extend the conclusion of \cref{clm:net_part_conc_lip} to all $z \in \mc{G}_{S,T}$.
\begin{claim}
    \label{clm:part_conc_lip}
    We have:
    \begin{equation*}
        \forall t \in \mc{T}, S,T \text{ s.t } \abs{T} \leq r \cdot \abs{S}, z \in \mc{G}_{S,T},\ T \cap S = \phi : \abs{F_{S \cup T, t} (z) - \wt{F}_{S, T, t} (z)} \leq 2\nu.
    \end{equation*}
\end{claim}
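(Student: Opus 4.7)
The plan is to net-extend Claim \ref{clm:net_part_conc_lip} from $\wt{\mc{G}}_{S,T}$ to all of $\mc{G}_{S,T}$. Given $z \in \mc{G}_{S,T}$, I would pick an approximant $\wt{z} \in \wt{\mc{G}}_{S,T}$ with $\norm{z - \wt{z}} \leq \rho$ (guaranteed by the $\rho$-net property) and decompose
\begin{equation*}
    \abs{F_{S \cup T, t}(z) - \wt{F}_{S, T, t}(z)} \leq \abs{F_{S \cup T, t}(z) - F_{S \cup T, t}(\wt{z})} + \abs{F_{S \cup T, t}(\wt{z}) - \wt{F}_{S, T, t}(\wt{z})} + \abs{\wt{F}_{S, T, t}(\wt{z}) - \wt{F}_{S, T, t}(z)}
\end{equation*}
via the triangle inequality. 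The middle term is bounded by $\nu$ by Claim~\ref{clm:net_part_conc_lip} (which we are conditioning on).

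For the first term, I would use that $f$ is $1$-Lipschitz so that
\begin{equation*}
    \abs{F_{S \cup T, t}(z) - F_{S \cup T, t}(\wt{z})} \leq \frac{1}{md} \sum_{j,k} \abs{\wt{h}_{j,k}(z) - \wt{h}_{j,k}(\wt{z})} \leq \frac{1}{\sqrt{md}} \norm{\wt{h}(z - \wt{z})}
\end{equation*}
by Cauchy--Schwarz (noting $z, \wt{z}$ are supported in $S \cup T$ so the subscript is vacuous). The conditioned event from \cref{lem:h_spec_bnd} then yields $\norm{\wt{h}(z - \wt{z})} / \sqrt{md} \leq 2 \norm{z - \wt{z}} \leq 2\rho$. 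For the third term, the same bound applied inside a conditional expectation gives $\abs{\wt{F}_{S, T, t}(z) - \wt{F}_{S, T, t}(\wt{z})} \leq \E[\abs{F_{S \cup T, t}(z) - F_{S \cup T, t}(\wt{z})} \mid V_T] \leq 2\rho$, since the Lipschitz event we conditioned on is deterministic (not depending on the remaining randomness after fixing $V_T$).

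Combining the three bounds gives $\abs{F_{S \cup T, t}(z) - \wt{F}_{S, T, t}(z)} \leq \nu + 4\rho$, and by the chosen parameter settings $\rho = (\eps/10d)^3$ and $\nu = \eps/(256 \log(d/\eps))$ we have $4\rho \leq \nu$, yielding the desired $2\nu$ bound. There is no real obstacle here; the argument is a standard net-extension, with the only thing to verify being that the parameters $\rho, \nu$ are set so that the discretization error from the spectral bound on $H$ is absorbed by the slack allowed in the statement.
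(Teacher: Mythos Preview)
Your overall strategy matches the paper's: triangle inequality into three pieces, apply Claim~\ref{clm:net_part_conc_lip} to the middle piece, and use Lipschitzness of $f$ together with the spectral bound from \cref{lem:h_spec_bnd} for the outer two. The first term is handled exactly as in the paper.

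The gap is in your treatment of the third term. You write
\[
\E\bigl[\abs{F_{S\cup T,t}(z)-F_{S\cup T,t}(\wt z)}\ \big|\ V_T\bigr]\leq 2\rho
\]
and justify it by saying ``the Lipschitz event we conditioned on is deterministic (not depending on the remaining randomness after fixing $V_T$).'' This is false. The event from \cref{lem:h_spec_bnd} is that $\sum_{j}(D^j_{k,k})^2/m$ is close to $1$ for \emph{every} $k\in[d]$, so it depends on the coordinates in $S$ as well as those in $T$. The function $\wt F_{S,T,t}(z)=\E[F_{S\cup T,t}(z)\mid V_T]$ is defined as an \emph{unconditional} integral over $V_S$; when you evaluate it at a particular realization of $V_T$ lying in the good event, you are still averaging over \emph{all} values of $V_S$, including those for which $\norm{\wt h(z-\wt z)}/\sqrt{md}$ is arbitrarily large. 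So you cannot simply push the bound $2\rho$ inside the conditional expectation.

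The fix, which is precisely what the paper does, is to split $z-\wt z=(z-\wt z)_S+(z-\wt z)_T$. The $T$-part of $\wt h(z-\wt z)$ is $V_T$-measurable, so it comes out of the conditional expectation and the conditioned spectral bound applies to it directly. The $S$-part depends only on $V_S$, and for it one uses the unconditional moment bound $\E\abs{\wt h_{j,k}((z-\wt z)_S)}\leq \norm{(z-\wt z)_S}\cdot\E_{g\sim\mc N(0,1)}\abs{g}\leq \rho$. With this split the third term is bounded by (say) $\nu/2$, and the claim follows.
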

\begin{proof}
    Let $z \in \mc{G}_{S,T}, t \in \mc{T}$ for $S,T$ satisfying the constraints and $\wt{z} = \argmin_{u \in  \wt{\mc{G}}_{S,T}} \norm{z - u}$. Note that $\norm{z - \wt{z}} \leq \rho$. We simply show that $F_{S \cup T, t}, \wt{F}_{S, T, t}$ are close to their corresponding values for $\wt{z}$. For the first term (i.e for $F_{S \cup T, t}$), we have:
    \begin{align*}
        \abs{F_{S \cup T, t}(z) - F_{S \cup T, t}(\wt{z})} &= \frac{1}{md} \cdot \abs*{\sum_{j = 1}^m \sum_{k = 1}^d f(\wt{h}_{j, k} (z_{S \cup T}) - t) - f(\wt{h}_{j, k} (\wt{z}_{S \cup T}) - t)} \\
        &\leq \frac{1}{md} \cdot \sum_{j = 1}^m \sum_{k = 1}^d \abs*{f(\wt{h}_{j, k} (z_{S \cup T}) - t) - f(\wt{h}_{j, k} (\wt{z}_{S \cup T}) - t)} \\
        &\leq \frac{1}{md} \cdot \sum_{j = 1}^m \sum_{k = 1}^d \abs*{\wt{h}_{j,k}(z_{S \cup T}) - \wt{h}_{j,k}(\wt{z}_{S \cup T})} \\
        &\leq \frac{1}{md} \cdot \sqrt{md} \cdot \sqrt{\sum_{j = 1}^m \sum_{k = 1}^d \lprp{\wt{h}_{j, k}(z_{S \cup T}) - \wt{h}_{j, k}(\wt{z}_{S \cup T})}^2} \\
        &\leq \frac{1}{\sqrt{md}} \cdot 2\sqrt{md} \cdot \rho \leq \frac{\nu}{4}
    \end{align*}
    concluding the proof for the first term. For the second term (i.e $\wt{F}_{S, T, t}$), we proceed as follows:
    \begin{align*}
        \abs*{\wt{F}_{S, T, t} (z) - \wt{F}_{S, T, t} (\wt{z})} &= \abs*{\E \lsrs{F_{S \cup T, t}(z) - F_{S \cup T, t} (\wt{z})\mid V_T}} \\
        &\leq \frac{1}{md} \cdot \sum_{j = 1}^m \sum_{k = 1}^d \mb{E} \lsrs{\abs{f(\wt{h}_{j, k} (z_{S \cup T}) - t) - f(\wt{h}_{j, k} (\wt{z}_{S \cup T}) - t)} \mid V_T} \\
        &\leq \frac{1}{md} \cdot \sum_{j = 1}^m \sum_{k = 1}^d \E \lsrs{\abs{\wt{h}_{j,k} (z_{S \cup T}) - \wt{h}_{j,k} (\wt{z}_{S \cup T})} \mid V_T} \\
        &\leq \frac{1}{md} \cdot \sum_{j = 1}^m \sum_{k = 1}^d \E \lsrs{\abs{\wt{h}_{j,k} (z_{S}) - \wt{h}_{j,k} (\wt{z}_{S})} + \abs{\wt{h}_{j,k} (z_{T}) - \wt{h}_{j,k} (\wt{z}_{T})}\mid V_T} \\
        &\leq \frac{1}{md} \cdot \lprp{\sum_{j = 1}^m \sum_{k = 1}^d \rho \cdot \E_{g \thicksim \mc{N}(0, 1)} \lsrs{\abs{g}} + \abs{\wt{h}_{j, k} (z_T) - \wt{h}_{j, k} (\wt{z}_T)}} \\
        &\leq \rho \E_{g \thicksim \mc{N}(0, 1)} \lsrs{\abs{g}} + \frac{1}{md} \cdot \sqrt{md} \cdot \sqrt{\sum_{j = 1}^m \sum_{k = 1}^d (\wt{h}_{j, k} (z_T) - \wt{h}_{j, k}(\wt{z}_T))^2} \\
        &\leq \frac{\nu}{4} + \frac{1}{\sqrt{md}} \cdot 2\sqrt{md} \cdot \rho \leq \frac{\nu}{2}.
    \end{align*}
    The previous two bounds along with the conclusion of \cref{clm:net_part_conc_lip} yield the claim.
\end{proof}

Let $z \in \R^d$ with $\norm{z} \leq 1$. We decompose the coordinates of $z$ into disjoint sets defined as follows:
\begin{equation*}
    \tau^* \coloneqq \max_{k \in [d]} \abs{z_k}, \quad \forall l \in [r]: S_l \coloneqq \lbrb{k: \frac{1}{2^l} \cdot \tau^* < \abs{z_k} \leq \frac{1}{2^{l - 1}} \cdot \tau^*}.
\end{equation*}
Let $S_{(1)}, \dots, S_{(r)}$ be an ordering of the $S_l$ such that $\abs{S_{(i)}} \leq \abs{S_{(i + 1)}}$ for all $i \in [r - 1]$ and define the sets $S^l$:
\begin{equation*}
    \forall l \in [r]: S^l \coloneqq \cup_{q = 1}^l S_{(q)} \text{ and } T \coloneqq [d] \setminus S^r.
\end{equation*}

We now show that as far as the functions $F$ are concerned, $z$ is well approximated by $z_{S^r}$.
\begin{claim}
    \label{clm:z_zsr_lip}
    We have for all $t \in \R$:
    \begin{gather*}
        \abs*{\E \lsrs{F_{[d], t} (z) - F_{S^r, t} (z_{S^r})}} \leq \frac{\nu}{4}\\
        \abs*{F_{[d], t} (z) - F_{S^r, t} (z_{S^r})} \leq \frac{\nu}{4}.
    \end{gather*}
\end{claim}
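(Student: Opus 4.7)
The plan is to show that $z_T = z - z_{S^r}$ has \emph{tiny} Euclidean norm (much smaller than $\nu$), and then use the $1$-Lipschitzness of $f$ together with either the spectral bound we already conditioned on (for the pointwise inequality) or a one-line Gaussian moment estimate (for the expectation inequality).

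The first step is to observe the key property of the decomposition. By definition, $\bigcup_{l = 1}^r S_l = \{k \in [d] : \tau^*/2^r < \abs{z_k} \leq \tau^*\}$, and the $S_{(l)}$ are just a reordering, so $S^r = \bigcup_{l = 1}^r S_{(l)}$ contains every coordinate with $\abs{z_k} > \tau^*/2^r$. Hence every $k \in T = [d] \setminus S^r$ satisfies $\abs{z_k} \leq \tau^*/2^r \leq 2^{-r}$ since $\tau^* \leq \norm{z} \leq 1$. This yields
\begin{equation*}
    \norm{z_T}^2 \leq d \cdot 2^{-2r},
    \qquad\text{i.e.}\qquad
    \norm{z_T} \leq \sqrt{d} \cdot 2^{-r},
\end{equation*}
which for $r = \lipNSplits$ is far smaller than $\nu/16$.

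For the second (pointwise) inequality, the $1$-Lipschitzness of $f$ and linearity of $\wt{h}$ give $\abs{f(\wt{h}_{j,k}(z) - t) - f(\wt{h}_{j,k}(z_{S^r}) - t)} \leq \abs{\wt{h}_{j,k}(z_T)}$, so Cauchy--Schwarz yields
\begin{equation*}
    \abs{F_{[d], t}(z) - F_{S^r, t}(z_{S^r})}
    \leq \frac{1}{md} \sum_{j, k} \abs{\wt{h}_{j,k}(z_T)}
    \leq \frac{\norm{\wt{h}(z_T)}}{\sqrt{md}}
    \leq 2 \norm{z_T}
\end{equation*}
using the spectral bound $\norm{\wt{h}(x) - \wt{h}(y)}/\sqrt{md} \leq 2 \norm{x - y}$ from \cref{lem:h_spec_bnd} (applied to $x = z_T$, $y = 0$), which we have already conditioned on. Plugging in the bound on $\norm{z_T}$ gives the desired $\nu/4$.

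For the first (expectation) inequality, essentially the same chain works with expectations inserted at each step: by Jensen and the $1$-Lipschitz property,
\begin{equation*}
    \abs{\E[F_{[d], t}(z) - F_{S^r, t}(z_{S^r})]}
    \leq \frac{1}{md} \sum_{j, k} \E \abs{\wt{h}_{j, k}(z_T)}.
\end{equation*}
Each $\wt{h}_{j,k}(z_T)$ is a mean-zero Gaussian with standard deviation $\norm{z_T}$, so $\E\abs{\wt h_{j,k}(z_T)} \leq \norm{z_T}$, and the bound on $\norm{z_T}$ above finishes the argument. I do not anticipate any real obstacle here; the only thing to verify carefully is the arithmetic for the specific choices of $r$ and $\nu$ in \ref{eq:lip_net}, which is completely routine once the tail bound $\norm{z_T} \leq \sqrt{d} \cdot 2^{-r}$ is in hand.
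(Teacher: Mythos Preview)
Your proposal is correct and follows essentially the same approach as the paper: bound $\norm{z_T} \leq \sqrt{d}\cdot 2^{-r}$ (the paper records this as $\norm{z_T}\leq\rho$), then use $1$-Lipschitzness plus Cauchy--Schwarz and the conditioned spectral bound from \cref{lem:h_spec_bnd} for the pointwise inequality, and $1$-Lipschitzness plus the Gaussian first-moment bound $\E\abs{\wt h_{j,k}(z_T)}=\sqrt{2/\pi}\,\norm{z_T}$ for the expectation inequality. The only differences are cosmetic (order of the two parts, and writing $\norm{z_T}$ directly rather than passing through the intermediate constant $\rho$).
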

\begin{proof}
    We make the following simple observation:
    \begin{equation}
        \label{eq:part_rem_bnd_lip}
        \norm{z_{T}} \leq \sqrt{d} \cdot \frac{1}{2^r} \leq \rho.
    \end{equation}
    From \cref{eq:part_rem_bnd_lip}, we have:
    \begin{align*}
        \abs*{\E \lsrs{F_{[d], t} (z) - F_{S^r, t} (z_{S^r})}} &= \abs*{\E \lsrs{F_{[d], t} (z) - F_{[d], t} (z_{S^r})}} \\
        &\leq \frac{1}{md} \cdot \sum_{j = 1}^m \sum_{k = 1}^d \E \lsrs{\abs*{f(\wt{h}_{j,k} (z) - t) - f(\wt{h}_{j,k} (z_{S^r}) - t)}} \\
        &\leq \frac{1}{md} \cdot \sum_{j = 1}^m \sum_{k = 1}^d \E \lsrs{\abs*{\wt{h}_{j,k} (z) - \wt{h}_{j,k} (z_{S^r})}} \leq \rho \cdot \E_{g \thicksim \mc{N}(0, 1)} [\abs{g}] \leq \frac{\nu}{4}.
    \end{align*}
    Similarly, we have:
    \begin{align*}
        \abs*{F_{[d], t} (z) - F_{S^r, t} (z_{S^r})} &= \abs*{F_{[d], t} (z) - F_{[d], t} (z_{S^r})} \\
        &\leq \frac{1}{md} \cdot \sum_{j = 1}^m \sum_{k = 1}^d \abs*{f(\wt{h}_{j,k} (z) - t) - f(\wt{h}_{j,k} (z_{S^r}) - t)} \\
        &\leq \frac{1}{md} \cdot \sum_{j = 1}^m \sum_{k = 1}^d \abs*{\wt{h}_{j,k} (z) - \wt{h}_{j,k} (z_{S^r})} \leq \frac{1}{md} \cdot \sqrt{md} \cdot \sqrt{\sum_{j = 1}^m \sum_{k = 1}^d (\wt{h}_{j,k} (z) - \wt{h}_{j,k} (z_{S^r}))^2} \\
        &\leq \frac{1}{\sqrt{md}} \cdot 2\sqrt{md} \cdot \rho \leq \frac{\nu}{4}
    \end{align*}
    concluding the proof of the claim.    
\end{proof}

In our final claim, we show that $F_{[d], t} (z_{S^r})$ is close to its expectation. 
\begin{claim}
    \label{clm:approx_exp_close_lip}
    We have:
    \begin{equation*}
        \forall l \in [r], t \in \R \text{ s.t } \abs{t} \leq \lambda - (l - 1)\gamma : \abs*{F_{S^l, t} (z_{S^l}) - \E \lsrs{F_{S^l, t} (z_{S^l})}} \leq 3l\nu.
    \end{equation*}
    and consequently, from \cref{clm:z_zsr_lip}:
    \begin{equation*}
        \forall t \in \R \text{ s.t } \abs{t} \leq \lambda -r \gamma: \abs*{F_{[d], t} (z) - \E \lsrs{F_{[d], t} (z)}} \leq 3(r + 1)\nu.
    \end{equation*}
\end{claim}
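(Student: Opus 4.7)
The plan is to prove the displayed bound by induction on $l$, with the consequent bound following immediately by triangle inequality from the two estimates in \cref{clm:z_zsr_lip}. The inductive step first invokes \cref{clm:part_conc_lip} to pass from $F_{S^l, t}(z_{S^l})$ to its conditional expectation $\wt F_{S_{(l)}, S^{l-1}, t}(z_{S^l})$, and then rewrites this conditional expectation as a Gaussian convolution of $F_{S^{l-1}, t - g}(z_{S^{l-1}})$ against the density $\phi_{\sigma_l}$, at which point the inductive hypothesis applies at the shifted offset $t - g$.

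\emph{Base case.} For $l = 1$, the bucketing makes $z_{S^1} = z_{S_{(1)}}$ lie in $\mc G_{S_{(1)}, \emptyset}$, so \cref{clm:part_conc_lip} with $S = S_{(1)}, T = \emptyset$ gives $|F_{S^1, t}(z_{S^1}) - \wt F_{S_{(1)}, \emptyset, t}(z_{S^1})| \leq 2\nu$ for every $t \in \mc T$. Conditioning on the empty set of variables is a no-op, so $\wt F_{S_{(1)}, \emptyset, t}(z_{S^1}) = \E[F_{S^1, t}(z_{S^1})]$. Both sides are $1$-Lipschitz in $t$ since $f$ is, and $\mc T$ has spacing $\zeta \ll \nu$, so the bound extends from $\mc T$ to all $|t| \leq \lambda$ with room to spare.

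\emph{Inductive step.} Assume the claim at level $l-1$. The ordering $|S_{(1)}| \leq \cdots \leq |S_{(r)}|$ forces $|S^{l-1}| \leq (l-1)|S_{(l)}| \leq r|S_{(l)}|$, and $z_{S^l} \in \mc G_{S_{(l)}, S^{l-1}}$ by construction, so \cref{clm:part_conc_lip} with $S = S_{(l)}, T = S^{l-1}$ yields $|F_{S^l, t}(z_{S^l}) - \wt F_{S_{(l)}, S^{l-1}, t}(z_{S^l})| \leq 2\nu$ on $\mc T$. Using $\wt h(z_{S^l}) = \wt h(z_{S^{l-1}}) + \wt h(z_{S_{(l)}})$ together with the fact that, conditionally on $V_{S^{l-1}}$, each entry $\wt h_{j,k}(z_{S_{(l)}})$ is marginally $\mc N(0, \sigma_l^2)$ with $\sigma_l = \|z_{S_{(l)}}\| \leq 1$, pulling the conditional expectation inside the sum yields the identity
\begin{equation*}
    \wt F_{S_{(l)}, S^{l-1}, t}(z_{S^l}) = \int_{\R} \phi_{\sigma_l}(g)\, F_{S^{l-1}, t - g}(z_{S^{l-1}})\, dg,
\end{equation*}
and integrating further over $V_{S^{l-1}}$ gives the same identity for $\E[F_{S^l, t}(z_{S^l})]$ in place of $\wt F$. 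Subtraction reduces the task to bounding $\int_{\R} \phi_{\sigma_l}(g)\, \big(F_{S^{l-1}, t-g}(z_{S^{l-1}}) - \E[F_{S^{l-1}, t-g}(z_{S^{l-1}})]\big)\, dg$.

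\emph{Integral split and main obstacle.} Split the integral at $|g| = \gamma$. For $|g| \leq \gamma$ and $|t| \leq \lambda - (l-1)\gamma$, the offset satisfies $|t - g| \leq \lambda - (l-2)\gamma$, so the inductive hypothesis bounds the integrand pointwise by $3(l-1)\nu$. The tail $|g| > \gamma$ is the delicate part and the main obstacle: after normalizing so that $f(0) = 0$ (which leaves $F - \E F$ unchanged), Cauchy--Schwarz together with \cref{lem:h_spec_bnd} give $|F_{S^{l-1}, t-g}(z_{S^{l-1}})| \leq 2\|z_{S^{l-1}}\| + |t| + |g| \leq 2 + \lambda + |g|$, with the same bound in expectation. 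Since $\sigma_l \leq 1$, the tail of $\phi_{\sigma_l}$ beyond $\gamma$ decays as $\exp(-\Omega(\gamma^2))$, and the calibration of $\gamma$ and $\lambda$ in \ref{eq:lip_net} is chosen precisely so that this super-polylogarithmic decay crushes the polynomial-in-$\lambda$ envelope, leaving the tail integral well below $\nu$. Combining the $2\nu$ from \cref{clm:part_conc_lip}, the $3(l-1)\nu$ from the main part of the integral, the negligible tail, and the $\leq \zeta \ll \nu$ discretization error from extending $\mc T$ to continuous $t$ produces the target $3l\nu$; the consequent bound for $F_{[d], t}(z)$ then follows from a triangle inequality with both estimates of \cref{clm:z_zsr_lip} at $l = r$.
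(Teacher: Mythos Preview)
Your argument is correct and follows the paper's own route: induction on $l$, with the step handled by \cref{clm:part_conc_lip} plus the Gaussian-convolution identity for $\wt F$, an integral split at scale $\gamma$, the inductive hypothesis on the bulk, and a Lipschitz-in-$t$ extension from $\mc T$ to the continuum. The one substantive difference is in the tail estimate: the paper centers the tail integrand at $t=0$, invokes the inductive hypothesis there to extract a $3q\nu$ term, and then uses $1$-Lipschitzness in $t$ to control the remainder by $2|t - \|z_{S_{(q+1)}}\|y|$; you instead normalize $f(0)=0$ and use the crude envelope $|F_{S^{l-1},t-g}| \le 2\|z_{S^{l-1}}\| + |t| + |g|$ via Cauchy--Schwarz and \cref{lem:h_spec_bnd}. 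Both bounds are killed by the same super-polynomial Gaussian tail at $\gamma$, so this is a cosmetic variation rather than a genuinely different approach; your version is arguably a touch more direct since it avoids re-entering the inductive hypothesis inside the tail.
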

\begin{proof}
    We prove the claim inductively and start with the base case of the induction.
    \paragraph{Base case:} \cref{clm:part_conc_lip} establishes the base case for all $t \in \mc{T}$. Now, for any $\abs{t} \leq \lambda$, there exists $t^\prime \in \mc{T}$ with $\abs{t^\prime - t} \leq \zeta$. For this $t^\prime$, we have by the triangle inequality and the fact that $F$ is $1$-Lipschitz in $t$:
    \begin{align*}
        &\abs*{F_{S^1, t} (z_{S^1}) - \E \lsrs{F_{S^1, t} (z_{S^1})}} \\
        &\leq \abs*{F_{S^1, t^\prime} (z_{S^1}) - \E \lsrs{F_{S^1, t^\prime} (z_{S^1})}} + \abs*{F_{S^1, t} (z_{S^1}) - F_{S^1, t^\prime} (z_{S^1}) - \E \lsrs{F_{S^1, t} (z_{S^1}) - F_{S^1, t^\prime} (z_{S^1})}} \\
        &\leq \abs*{F_{S^1, t^\prime} (z_{S^1}) - \E \lsrs{F_{S^1, t^\prime} (z_{S^1})}} + \abs*{F_{S^1, t} (z_{S^1}) - F_{S^1, t^\prime} (z_{S^1})} + \abs*{\E \lsrs{F_{S^1, t} (z_{S^1}) - F_{S^1, t^\prime} (z_{S^1})}} \\
        &\leq \abs*{F_{S^1, t^\prime} (z_{S^1}) - \E \lsrs{F_{S^1, t^\prime} (z_{S^1})}} + \abs*{F_{S^1, t} (z_{S^1}) - F_{S^1, t^\prime} (z_{S^1})} + \E \lsrs{\abs*{F_{S^1, t} (z_{S^1}) - F_{S^1, t^\prime} (z_{S^1})}} \\
        &\leq 2\nu + 2 \zeta \leq 3\nu
    \end{align*}
    establishing the base case of the induction.

    \paragraph{Inductive case:} For the induction step, we proceed similarly to the base case with one additional step. Assuming the induction up to $l = q$, we establish it for $l = q + 1$. First, we show that $\wt{F}_{S_{(q + 1)}, S^q, t} (z_{S^{q + 1}})$ is close to $\E \lsrs{F_{S^{q + 1}, t} (z_{S^{q + 1}})}$ for all $\abs{t} \leq \lambda - q \gamma$. We proceed as follows:
    \begin{align*}
        &\abs*{\wt{F}_{S_{(q + 1)}, S^q, t} (z_{S^{q + 1}}) - \E \lsrs{F_{S^{q + 1}, t}(z_{S^{q + 1}})}} \\
        &= \abs*{\frac{1}{md} \cdot \sum_{j = 1}^m \sum_{k = 1}^d \E \lsrs{f (\wt{h}_{j, k}(z_{S^{q + 1}}) - t) \mid V_{S^q}} - \E \lsrs{F_{S^{q + 1}, t}(z_{S^{q + 1}})}} \\
        &= \abs*{\frac{1}{md} \cdot \sum_{j = 1}^m \sum_{k = 1}^d \E_{g \thicksim \mc{N}(0, 1)} \lsrs{f (\norm{z_{S_{(q + 1)}}} \cdot g + \wt{h}_{j,k} (z_{S^{q}}) - t) \mid V_{S^q}} - \E \lsrs{F_{S^{q + 1}, t}(z_{S^{q + 1}})}} \\
        &= \abs*{\frac{1}{md} \cdot \sum_{j = 1}^m \sum_{k = 1}^d \int_{-\infty}^\infty f(\norm{z_{S_{(q + 1)}}} \cdot y + \wt{h}_{j,k} (z_{S^q}) - t) \phi(y) dy - \E \lsrs{F_{S^{q + 1}, t}(z_{S^{q + 1}})}} \\
        &= \abs*{\int_{-\infty}^\infty F_{S^q, t - \norm*{z_{S_{(q + 1)}}} \cdot y} (z_{S^q}) \phi(y) dy - \E_{y_1, y_2 \thicksim \mc{N}(0, 1)} \lsrs{f(\norm{z_{S_{(q + 1)}}}\cdot y_1 + \norm{z_{S^q}} \cdot y_2 - t)}} \\
        &= \abs*{\int_{-\infty}^\infty F_{S^q, t - \norm*{z_{S_{(q + 1)}}} \cdot y} (z_{S^q}) \phi(y) dy - \int_{-\infty}^\infty \int_{-\infty}^\infty f (\norm{z_{S^q}} \cdot y_1 + \norm{z_{S_{(q + 1)}}} \cdot y_2 - t) \phi(y_1) \phi(y_2)dy_1 dy_2} \\
        &= \abs*{\int_{-\infty}^\infty \lprp{F_{S^q, t - \norm*{z_{S_{(q + 1)}}} \cdot y} (z_{S^q}) - \E \lsrs{F_{S^q, t - \norm*{z_{S_{(q + 1)}}} \cdot y} (z_{S^q})}} \phi(y) dy} \\
        &\leq \int_{-\infty}^\infty \abs*{F_{S^q, t - \norm*{z_{S_{(q + 1)}}} \cdot y} (z_{S^q}) - \E \lsrs{F_{S^q, t - \norm*{z_{S_{(q + 1)}}} \cdot y} (z_{S^q})}} \phi(y) dy \\
        &= \underbrace{\int_{-\gamma}^\gamma \abs*{F_{S^q, t - \norm*{z_{S_{(q + 1)}}} \cdot y} (z_{S^q}) - \E \lsrs{F_{S^q, t - \norm*{z_{S_{(q + 1)}}}\cdot y} (z_{S^q})}} \phi(y) dy}_{\alpha} + \\
        &\qquad \underbrace{\int_{(-\infty, -\gamma] \cup [\gamma, \infty)} \abs*{F_{S^q, t - \norm*{z_{S_{(q + 1)}}} \cdot y} (z_{S^q}) - \E \lsrs{F_{S^q, t - \norm*{z_{S_{(q + 1)}}}\cdot y} (z_{S^q})}} \phi(y) dy}_{\beta} \addtocounter{equation}{1} \tag{\theequation} \label{eq:ind_decomp}\\
    \end{align*}
    For the first term, $\alpha$, we have by the inductive hypothesis as $\abs{t - \norm{z_{S_{(q + 1)}}} y} \leq \lambda - (q - 1) \gamma$ when $\abs{y} \leq \gamma$:
    \begin{equation*}
        \alpha \coloneqq \int_{-\gamma}^\gamma \abs*{F_{S^q, t - \norm*{z_{S_{(q + 1)}}} \cdot y} (z_{S^q}) - \E \lsrs{F_{S^q, t - \norm*{z_{S_{(q + 1)}}}\cdot y} (z_{S^q})}} \phi(y) dy \leq 3q\nu.
    \end{equation*}
    For the second term, $\beta$, we proceed as follows noting $F_{S, t} (z)$ is $1$-Lipschitz in $t$ for all $z \in \R^d$:
    \begin{align*}
        \beta &\leq \int_{(-\infty, -\gamma] \cup [\gamma, \infty)} \abs*{F_{S^q, t - \norm*{z_{S_{(q + 1)}}} \cdot y} (z_{S^q}) - \E \lsrs{F_{S^q, t - \norm*{z_{S_{(q + 1)}}}\cdot y} (z_{S^q})}} \phi(y) dy \\
        &\leq \int_{(-\infty, -\gamma] \cup [\gamma, \infty)} \abs*{F_{S^q, 0} (z_{S^q}) - \E \lsrs{F_{S^q, 0} (z_{S^q})}} \phi(y) dy + \\
        &\qquad  \int_{(-\infty, -\gamma] \cup [\gamma, \infty)} \abs*{F_{S^q, t - \norm*{z_{S_{(q + 1)}}} \cdot y} (z_{S^q}) - F_{S^q, 0} (z_{S^q})  - \E \lsrs{F_{S^q, t - \norm*{z_{S_{(q + 1)}}} \cdot y} (z_{S^q}) - F_{S^q, 0} (z_{S^q})}} \phi(y) dy \\
        &\leq \int_{(-\infty, -\gamma] \cup [\gamma, \infty)} \lprp{3q\nu + 2\abs*{t - \norm{z_{S_{(q + 1)}}} \cdot y}} \phi (y) dy \leq 3 \int_{(-\infty, -\gamma] \cup [\gamma, \infty)} \lprp{q\nu + \abs{t} + \norm{z_{S_{(q + 1)}}} \cdot \abs{y}} \phi (y) dy \\
        &\leq 6 \int_{(-\infty, -\gamma] \cup [\gamma, \infty)} \lprp{\lambda + \abs{y}} \phi (y) dy \leq \frac{\nu}{2}
    \end{align*}
    where the last inequality follows the setting of $\gamma, \lambda, \nu$ (\ref{eq:lip_net}). Putting the previous two bounds together:
    \begin{equation*}
        \abs*{\wt{F}_{S_{(q + 1)}, S^q, t} (z_{S^{q + 1}}) - \E \lsrs{F_{S^{q + 1}, t}(z_{S^{q + 1}})}} \leq 3q\nu + \frac{\nu}{2}.
    \end{equation*}
    Now, as in the base case, we simply bound the deviations of $F$ from its expectation. \cref{clm:part_conc_lip} now yields:
    \begin{equation*}
        \forall t \in \mc{T}: \abs*{F_{S^{q + 1}, t} (z_{S^{q + 1}}) - \E \lsrs{F_{S^{q + 1}, t} (z_{S^{q + 1}})}} \leq \frac{5\nu}{2} + 3q\nu.
    \end{equation*}

    Similarly to the base case, the previous display establishes the inductive hypothesis for all $t \in \mc{T}$. For any $t$ such that $\abs{t} \leq \lambda - q\gamma$, there exists  $t^\prime \in \mc{T}$ with $\abs{t - t^\prime} \leq \zeta$. Then, we have:
    \begin{align*}
        &\abs*{F_{S^{q + 1}, t} (z_{S^{q + 1}}) - \E \lsrs{F_{S^{q + 1}, t} (z_{S^{q + 1}})}} \\
        &\leq \abs*{F_{S^{q + 1}, t^\prime} (z_{S^{q + 1}}) - \E \lsrs{F_{S^{q + 1}, t^\prime} (z_{S^{q + 1}})}} + \\
        &\qquad \abs*{F_{S^{q + 1}, t} (z_{S^{q + 1}}) - F_{S^{q + 1}, t^\prime} (z_{S^{q + 1}}) - \E \lsrs{F_{S^{q + 1}, t} (z_{S^{q + 1}}) - F_{S^{q + 1}, t^\prime} (z_{S^{q + 1}})}} \\
        &\leq 3q\nu + \frac{5\nu}{2} + \abs*{F_{S^{q + 1}, t} (z_{S^{q + 1}}) - F_{S^{q + 1}, t^\prime} (z_{S^{q + 1}})} + \abs*{\E \lsrs{F_{S^{q + 1}, t} (z_{S^{q + 1}}) - F_{S^{q + 1}, t^\prime} (z_{S^{q + 1}})}} \\
        &\leq 3q\nu + \frac{5\nu}{2} + \abs*{F_{S^{q + 1}, t} (z_{S^{q + 1}}) - F_{S^{q + 1}, t^\prime} (z_{S^{q + 1}})} + \E \lsrs{\abs*{F_{S^{q + 1}, t} (z_{S^{q + 1}}) - F_{S^{q + 1}, t^\prime} (z_{S^{q + 1}})}} \\
        &\leq 3q\nu + \frac{5\nu}{2} + 2 \zeta \leq 3(q + 1)\nu
    \end{align*}
    establishing the hypothesis for all $\abs{t} \leq \lambda - q\gamma$. The final statement of the claim now follows by an application of \cref{clm:z_zsr_lip} along with the above inductive hypothesis.
\end{proof}

\cref{thm:lip_conc} now follows from \cref{clm:approx_exp_close_lip} along with a union bound over \cref{clm:net_part_conc_lip,lem:h_spec_bnd}.
\qed
\section{Kernel Approximation Proof}
\label{sec:kern_approx_proof}
In this section, we prove \cref{thm:kern_approx} leveraging \cref{thm:lip_conc}. We start with a simple algebraic manipulation. For all $x, y \in \mb{R}^d$, we have by standard trigonometric identities:
\begin{align*}
    \inp{h(x)}{h(y)} &= \frac{2}{md} \cdot \sum_{j = 1}^m \sum_{k = 1}^d \cos \lprp{(H D^j x)_k + b^j_k} \cos \lprp{(H D^j y)_k + b^j_k} \\
    &= \frac{1}{md} \cdot \sum_{j = 1}^m \sum_{k = 1}^d \lprp{\cos \lprp{\lprp{HD^j (x + y)}_k + 2 b^j_k} + \cos \lprp{\lprp{HD^j (x - y)}_k}}. \tag{KER-DEC} \label{eq:kern_cos_decomp}
\end{align*}
We will show that the first term is uniformly close to $0$ for all $x, y \in \mc{W}$. This fact follows straightforwardly by using the fact that the $b^j_k$s are independent of the $D^j$s. Consequently, our efforts will primarily be focussed on the second term. The following simple lemma shows that the first term in \cref{eq:kern_cos_decomp} is uniformly close to $0$ for all $x, y \in \mc{W}$. 

\begin{lemma}
    \label{lem:cos_dec_first}
    For $m \geq \kernEmbedRep$, we have that:
    \begin{equation*}
        \forall x, y \in \mc{W}: \frac{1}{md} \cdot \abs*{\sum_{j = 1}^m \sum_{k = 1}^d \cos \lprp{\wt{h}_{j, k} (x + y) + 2b^j_k}} \leq \frac{\eps}{8}
    \end{equation*}
    with probability at least $1 - \delta / 2$.
\end{lemma}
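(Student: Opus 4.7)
The plan is to exploit the independence of the phase shifts $\{b^j_k\}$ from the diagonal Gaussian matrices $\{D^j\}$ so that the entire concentration reduces to a Hoeffding-plus-net argument, without any appeal to \cref{thm:lip_conc}. The key observation is that for every fixed $\theta \in \R$, $\E_{b \sim \unif([0, 2\pi])}[\cos(\theta + 2b)] = 0$ by periodicity of cosine. Hence, after conditioning on the ensemble $\{D^j\}_{j=1}^m$, for any fixed $z \in \R^d$ the $md$ summands $\cos(\wt{h}_{j,k}(z) + 2b^j_k)$ are independent (in $b$), bounded in $[-1,1]$, and have mean zero. By Hoeffding's inequality,
\[
\P_b \lsrs{\abs*{\frac{1}{md} \sum_{j = 1}^m \sum_{k = 1}^d \cos(\wt{h}_{j,k}(z) + 2b^j_k)} > \frac{\eps}{16}} \leq 2 \exp(-c\, \eps^2 m d)
\]
for some absolute constant $c > 0$, and since the bound is uniform in $\{D^j\}$ it holds unconditionally.

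With pointwise concentration in hand, the next step is to lift the bound uniformly over the sum set $\mc{S} \coloneqq \{x + y : x, y \in \mc{W}\}$. Since $\mc{S} \subseteq \R^d$ has diameter at most $2\diam(\mc{W})$, \cref{lem:cov_bnd} provides a $\rho$-net $\mc{N}$ of $\mc{S}$ with $\abs{\mc{N}} \leq (C\,\diam(\mc{W})/\rho)^d$; I would take $\rho \coloneqq \eps/64$. A union bound of the Hoeffding estimate above over $\mc{N}$ yields the desired $\eps/16$ bound for every $z \in \mc{N}$ simultaneously with failure probability at most $\delta / 4$, provided
\[
m \geq C' \eps^{-2} \lprp{\log \lprp{\diam(\mc{W})/\eps} + \log(1/\delta)/d},
\]
which is comfortably implied by the hypothesis $m \geq \kernEmbedRep$.

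Finally, I would extend the bound from $\mc{N}$ to all of $\mc{S}$ by a short Lipschitz argument, additionally conditioning on the event of \cref{lem:h_spec_bnd} (a further $\delta/4$ failure) that $\norm{\wt{h}(v)} \leq 2\sqrt{md}\,\norm{v}$ for every $v \in \R^d$. For any $z, z' \in \R^d$, $1$-Lipschitzness of $\cos$ followed by Cauchy--Schwarz and the spectral bound gives
\[
\frac{1}{md} \abs*{\sum_{j,k} \cos(\wt{h}_{j,k}(z) + 2b^j_k) - \cos(\wt{h}_{j,k}(z') + 2b^j_k)} \leq \frac{1}{\sqrt{md}} \norm*{\wt{h}(z - z')} \leq 2 \norm{z - z'}.
\]
Every $x + y \in \mc{S}$ lies within $\rho$ of some point of $\mc{N}$, so the net bound of $\eps/16$ together with the Lipschitz slack of $2\rho = \eps/32$ gives the claimed $\eps/8$.

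The main obstacle is purely quantitative: one has to check that the covering-number factor $(C\,\diam(\mc{W})/\rho)^d$ in the union bound is comfortably dominated by the Hoeffding exponent $c\, \eps^2 m d$ under the hypothesis on $m$, which works out because the hypothesis is polynomial in $\diam(\mc{W})$ while the net argument only costs a logarithmic factor. No delicate probabilistic input beyond the independence of $b$ from $D$ and the spectral bound of \cref{lem:h_spec_bnd} is needed; in particular, the full strength of \cref{thm:lip_conc} is reserved for the second, harder term of \eqref{eq:kern_cos_decomp}.
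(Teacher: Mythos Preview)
Your proposal is correct and follows essentially the same approach as the paper: condition on the $D^j$, apply Hoeffding over the independent phases $b^j_k$ at each point of a net of $\{x+y:x,y\in\mc W\}$, union bound, and then extend to the whole set via the $1$-Lipschitzness of $\cos$ together with the spectral bound of \cref{lem:h_spec_bnd}. The only cosmetic difference is the net resolution (you take $\rho=\eps/64$, the paper takes a much smaller $\rho=(\eps/(10d))^{10}$, which is overkill for this step), and your explicit remark that \cref{thm:lip_conc} is not needed here matches the paper's intent.
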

\begin{proof}
    We have from \cref{lem:h_spec_bnd} that with probability at least $1 - \delta / 8$:
    \begin{equation*}
        \forall x, y \in \R^d: \norm*{\wt{h} (x) - \wt{h} (y)} \leq 2\sqrt{md}\cdot \norm*{x - y}.
    \end{equation*}
    Let $\mc{G}$ be a $\rho$-net of $\mc{H} = \{x + y: x,y \in \mc{W}\}$, with $\rho = \lprp{\frac{\eps}{10 \cdot d}}^{10}$. Note we may assume $\abs{\mc{G}} \leq \lprp{\frac{20 \cdot \diam (\mc{W})}{\rho}}^d$. For any $z \in \mc{G}$, we have from the independence of the $b^j_k$ from the $D^j$ and Hoeffding's Inequality:
    \begin{equation*}
        \P \lbrb{\abs*{\frac{1}{md} \cdot \sum_{j = 1}^m \sum_{k = 1}^d \cos \lprp{\wt{h}_{j,k} (z) + 2b^j_k}} \geq t} \leq 2 \exp \lbrb{- \frac{md t^2}{2}}.
    \end{equation*}
    Setting $t = \eps / 16$ and a  union bound over all $z \in \mc{G}$ yields that with probability at least $1 - \delta / 8$:
    \begin{equation*}
        \forall z \in \mc{G}: \abs*{\frac{1}{md} \cdot \sum_{j = 1}^m \sum_{k = 1}^d \cos \lprp{ \wt{h}_{j,k} (z) + 2b^j_k}} \leq \frac{\eps}{16}.
    \end{equation*}
    Now let $z \in \mc{H}$ with $\wt{z} = \argmin_{w \in \mc{G}} \norm*{z - w}$. Now, we have from the fact that $\cos (\cdot)$ is $1$-Lipschitz:
    \begin{align*}
        &\abs*{\frac{1}{md} \cdot \sum_{j = 1}^m \sum_{k = 1}^d \cos \lprp{ \wt{h}_{j,k} (z) + 2b_k^j} - \cos \lprp{ \wt{h}_{j,k} (\wt{z}) + 2b_k^j}} \\
        &\leq \frac{1}{md} \cdot \sum_{j = 1}^m \sum_{k = 1}^d \abs*{\cos \lprp{ \wt{h}_{j,k} (z) + 2b_k^j} - \cos \lprp{ \wt{h}_{j,k} (\wt{z}) + 2b_k^j}} \\
        &\leq \frac{1}{md} \cdot \sum_{j = 1}^m \sum_{k = 1}^d \abs*{\wt{h}_{j,k} (z) - \wt{h}_{j,k} (\wt{z})} \leq \frac{1}{\sqrt{md}} \cdot \lprp{\sum_{j = 1}^m \sum_{k = 1}^d \lprp{\wt{h}_{j,k} (z) - \wt{h}_{j,k} (\wt{z})}^2}^{1/2} \\
        &\leq \frac{2}{\sqrt{md}} \cdot 2\sqrt{md} \cdot \norm*{z - \wt{z}} \leq \frac{\eps}{16}.
    \end{align*}
    The previous two displays yield the conclusion of the lemma by a triangle inequality.
\end{proof}

We now prove a lemma which shows that the second term in \cref{eq:kern_cos_decomp} is close to its expectation. 

\begin{lemma}
    \label{lem:cos_dec_second}
    For $m \geq \kernEmbedRep$, we have:
    \begin{equation*}
        \forall x,y \in \mc{W} :  \abs*{\frac{1}{md} \cdot \sum_{j = 1}^m \sum_{k = 1}^d \cos \lprp{ \wt{h}_{j,k} (x - y)} - \E_{Z \thicksim \mc{N} (0, \norm*{x - y}^2)} \lsrs{\cos (Z)}} \leq \frac{\eps}{2}
    \end{equation*}
    with probability at least $1 - \delta / 2$.
\end{lemma}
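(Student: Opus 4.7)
\medskip

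\noindent\textbf{Proof Proposal.} The plan is to reduce directly to \cref{thm:lip_conc}. The cosine is a $1$-Lipschitz function of its argument, but the argument here is $\wt h_{j,k}(x-y)$, where $x-y$ can have norm as large as $\diam(\mc W)$, whereas \cref{thm:lip_conc} is stated only for vectors in the unit ball. I will fix this with a scalar rescaling: set $D = \diam(\mc W)$ (the case $D = 0$ is trivial), define $v := (x-y)/D$, and observe $\norm v \le 1$ and, by linearity of $\wt h$, $\wt h_{j,k}(x-y) = D\cdot \wt h_{j,k}(v)$. The target quantity is then rewritten in terms of $v$ and the rescaled function $g(t) := D^{-1}\cos(Dt)$, which is $1$-Lipschitz since $|Dg'(t)| = |D\sin(Dt)| \le D$ gives $|g'(t)|\le 1$.

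Next I would apply \cref{thm:lip_conc} to $g$ with accuracy parameter $\eps' = \eps/(2D)$ and failure probability $\delta/2$. This gives, with probability at least $1-\delta/2$, uniformly over all $\norm v \le 1$,
\begin{equation*}
    \abs*{\frac{1}{md}\sum_{j=1}^m\sum_{k=1}^d g(\wt h_{j,k}(v)) - \E_{Y\thicksim \mc N(0,\norm v^2)}[g(Y)]} \le \frac{\eps}{2D}.
\end{equation*}
Multiplying through by $D$, using $D\cdot g(\wt h_{j,k}(v)) = \cos(D\wt h_{j,k}(v)) = \cos(\wt h_{j,k}(x-y))$ on the empirical side, and using the change of variables $Z = DY \thicksim \mc N(0,\norm{x-y}^2)$ on the expectation side, yields exactly the statement of the lemma. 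Taking the union bound with the event in \cref{lem:cos_dec_first} (already handled) gives total failure probability $\delta$.

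The only thing to check is that the dimension requirement from \cref{thm:lip_conc} at accuracy $\eps/(2D)$ matches the bound $m\ge \kernEmbedRep$ claimed here. Plugging $\eps' = \eps/(2D)$ into $\lipEmbedRep$ gives
\begin{equation*}
    m \ \ge\ C\cdot \frac{4D^2}{\eps^2}\cdot \log^5\!\lprp{\frac{2dD}{\eps}}\cdot \log\frac{2}{\delta},
\end{equation*}
which is $\wt{\Omega}(\eps^{-2}\diam(\mc W)^2\log 1/\delta)$, exactly the threshold $\kernEmbedRep$ absorbed into the $\wt{\Omega}$. I expect no real obstacles: the whole argument is essentially a rescaling of an already-established uniform concentration inequality. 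The one subtlety to flag is that the logarithmic slack in $\lipEmbedRep$ must absorb the $\log(D/\eps)$ factors introduced by rescaling, but this is consistent with the $\wt{\Omega}$ notation used in the statement.
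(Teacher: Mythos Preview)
Your proposal is correct and follows essentially the same route as the paper: both reduce to \cref{thm:lip_conc} by a scalar rescaling so that the effective vector lies in the unit ball and the effective function is $1$-Lipschitz. The paper phrases it as applying \cref{thm:lip_conc} to the $2\diam(\mc W)$-Lipschitz function $f(x)=\cos(2\diam(\mc W)\cdot x)$ and then substituting $z=(x-y)/(2\diam(\mc W))$, whereas you normalize by $D=\diam(\mc W)$ and pull the factor into $g(t)=D^{-1}\cos(Dt)$; these are the same argument up to an immaterial factor of $2$ and a change of bookkeeping.
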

\begin{proof}
    Let $f(x) = \cos (2 \diam (\mc{W})\cdot x)$. Note that $f$ is a $2\diam (\mc{W})$-Lipschitz function and hence, we get from \cref{thm:lip_conc} and our setting of $m$ that with probability at least $1 - \delta / 2$:
    \begin{equation*}
        \forall z \in \R^d \text{ s.t } \norm{z} \leq 1: \abs*{\frac{1}{md} \cdot \sum_{j = 1}^m \sum_{k = 1}^d f(\wt{h}_{j,k} (z)) - \E_{Z \thicksim \mc{N} (0, \norm{z}^2)} \lsrs{f(Z)}} \leq \frac{\eps}{2}.
    \end{equation*}
    The lemma now follows from the fact that for all $x, y \in \mc{W}$, $\norm{x - y} \leq 2 \diam(\mc{W})$. 
\end{proof}

The proof of \cref{thm:kern_approx} follows from \cref{lem:cos_dec_first,lem:cos_dec_second} and by noting:
\begin{equation*}
    \E_{Z \thicksim \mc{N} (0, \sigma^2)} [\cos Z] = 1 + \sum_{k = 1}^\infty \frac{(2k - 1)!!}{(2k)!} \cdot \sigma^{2k} = 1 + \sum_{k = 1}^\infty \frac{1}{k!} \cdot \lprp{\frac{\sigma^2}{2}}^k = \exp \lbrb{- \frac{\sigma^2}{2}}.
  \end{equation*}
\qed
\section{Distance Estimation}
\label{sec:dist_est}
In this section, we prove \cref{thm:de_thm}. First, in \cref{ssec:dist_est_alg}, we describe the data structure achieving the guarantees of \cref{thm:de_thm} and then prove its correctness in \cref{ssec:de_proof}. 

\subsection{Algorithm}
\label{ssec:dist_est_alg}

The pseudocode for our algorithm for distance estimation is defined in \cref{alg:dist_est_ds,alg:dist_est,alg:update_dist_est} with \cref{alg:dist_est_ds} instantiating the data structure with $0$ points in the dataset by only initializing the RHT. \cref{alg:dist_est,alg:update_dist_est} then outline the query and update procedures where
\begin{equation*}
    \forall r > 0: \psi_{r} (x) \coloneqq \min (\abs{x}, r).
\end{equation*}
Our query procedure is quite simple: we simply draw a small $\Ot (1)$ many random coordinates from $[md]$, $\lbrb{l_j}_{j = 1}^k$ and output the $\alpha$-quantile corresponding the entries $\{(y_{l_j} - (y_i)_{l_j})\}_{j \in [k]}$. If the distribution of the entries of $y - y_i$ were exactly Gaussian, the returned value would be exactly $\norm{q - x_i}$. In \cref{ssec:de_proof}, we simply invoke \cref{thm:lip_conc} and bound the incurred errors.

\begin{algorithm}[H]
    \begin{algorithmic}
        \State \textbf{Input: } Accuracy $\eps$, Failure probability $\delta$
        \State $m \gets \distEstRep$
        \State Let $\{D^i\}_{i = 1}^m$ be $m$ i.i.d random diagonal matrices with $D^{i}_{j,j} \overset{i.i.d}{\thicksim} \mc{N} (0, 1)$
        \State Let $\wt{h}^j (z) = HD^j z$ for all $j \in [m]$ and $\wt{h} (z) = (\wt{h}^1 (z), \wt{h}^2 (z), \dots, \wt{h}^m (z))$
        \State \textbf{Return: } $\lprp{\wt{h}, \phi}$
    \end{algorithmic}
    \caption{\prodDEDS}
    \label{alg:dist_est_ds}
  \end{algorithm}

\begin{algorithm}[H]
    \begin{algorithmic}
        \State \textbf{Input: } Data structure with point set $\lprp{\wt{h}, \{y_i\}_{i = 1}^n}$, Data point $x \in \R^d$, Failure probability $\delta$
        \State $y \gets \wt{h} (x)$
        \State Let $l_1, \dots, l_k$ be chosen uniformly from $[md]$ for $k = \distEstSamps$
        \State $\alpha \gets \Phi (3)$
        \State $r_i \gets 2 \sqrt{\log 1 / \eps} \cdot \quant_{\alpha} \lprp{\lbrb{y_{l_p} - (y_i)_{l_p}}_{p \in [k]}}$
        \State $d_i \gets \frac{1}{k} \cdot \sqrt{\frac{\pi}{2}} \cdot \sum_{p = 1}^k \psi_{r_i} (y_{l_p} - (y_i)_{l_p})$
        \State \textbf{Return: } $\{d_i\}_{i = 1}^n$
    \end{algorithmic}
    \caption{\prodDE}
    \label{alg:dist_est}
  \end{algorithm}

\begin{algorithm}[H]
    \begin{algorithmic}
        \State \textbf{Input: } Data structure with point set $\lprp{\wt{h}, \{y_i\}_{i = 1}^n}$, Data point to add $x_{n + 1} \in \R^d$
        \State $y_{n + 1} = \wt{h} (x_{n + 1})$
        \State \textbf{Return: } $\lbrb{\wt{h}, \{y_i\}_{i = 1}^{n + 1}}$
    \end{algorithmic}
    \caption{\updateDE}
    \label{alg:update_dist_est}
\end{algorithm}

\subsection{Proof of \cref{thm:de_thm}}
\label{ssec:de_proof}

The proof of \cref{thm:de_thm} will rely a sequence of applications of \cref{thm:lip_conc} applied to appropriately chosen Lipschitz functions outlined in the following claims.

\begin{claim}
    \label{clm:cdf_conc}
    Letting $\beta = \phi (4)$, we have:
    \begin{equation*}
        \forall z \text{ s.t } \norm{z} = 1: 2 \leq \quant_{\alpha - \beta / 4} \lprp{\lbrb{\wt{h}_{p,q} (z)}_{p \in [m], q \in [d]}} \leq \quant_{\alpha + \beta / 4} \lprp{\lbrb{\wt{h}_{p,q} (z)}_{p \in [m], q \in [d]}} \leq 4
    \end{equation*}
    with probability at least $1 - \delta / 4$.
\end{claim}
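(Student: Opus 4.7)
My plan is to deduce both quantile bounds by applying \cref{thm:lip_conc} to two carefully chosen 1-Lipschitz sandwich approximations of the threshold indicators at $x=2$ and $x=4$, and then converting the resulting empirical-CDF bounds into the desired quantile bounds. Since $\norm{z}=1$ makes each entry $\wt{h}_{p,q}(z)$ marginally $\mc{N}(0,1)$, the claim reduces to uniformly controlling the empirical CDF of $\{\wt{h}_{p,q}(z)\}$ at these two thresholds, which is exactly what \cref{thm:lip_conc} provides.

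Concretely, I would take
\[
f_-(x) = \max\{0,\min\{1,3-x\}\}, \qquad f_+(x) = \max\{0,\min\{1,4-x\}\},
\]
both 1-Lipschitz, with the sandwich relations $\mathbbm{1}[x\le 2]\le f_-(x)\le \mathbbm{1}[x\le 3]$ and $\mathbbm{1}[x\le 3]\le f_+(x)\le \mathbbm{1}[x\le 4]$. A direct integration gives
\[
\mathbb{E}[f_-(Z)] = 3\Phi(3)-2\Phi(2)-\phi(2)+\phi(3), \qquad \mathbb{E}[f_+(Z)] = 4\Phi(4)-3\Phi(3)-\phi(3)+\phi(4),
\]
both concrete absolute constants, and a numerical check shows $\alpha-\mathbb{E}[f_-(Z)] > 6\cdot 10^{-3}$ and $\mathbb{E}[f_+(Z)]-\alpha > 9\cdot 10^{-4}$, both comfortably larger than $\beta/4 \approx 3\cdot 10^{-5}$.

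I would then invoke \cref{thm:lip_conc} on each of $f_\pm$ with an absolute-constant accuracy $\eps_0$ chosen so that $\mathbb{E}[f_-(Z)]+\eps_0<\alpha-\beta/4$ and $\mathbb{E}[f_+(Z)]-\eps_0>\alpha+\beta/4$ (taking $\eps_0=10^{-4}$ suffices), each with failure probability $\delta/8$. The resulting sample-size requirement $\Ot(\eps_0^{-2}\log 1/\delta)=\Ot(\log 1/\delta)$ is absorbed by the data structure's setting $m = \distEstRep$. A union bound combined with the sandwich inequalities then yields, uniformly over $\norm{z}=1$,
\[
\tfrac{1}{md}\lvert\{(p,q): \wt{h}_{p,q}(z)\le 2\}\rvert < \alpha-\beta/4 \quad \text{and} \quad \tfrac{1}{md}\lvert\{(p,q): \wt{h}_{p,q}(z)\le 4\}\rvert > \alpha+\beta/4,
\]
which immediately translate into $\quant_{\alpha-\beta/4}(\cdot)\ge 2$ and $\quant_{\alpha+\beta/4}(\cdot)\le 4$. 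The only nontrivial step is the numerical verification that the unit-width ramps of $f_\pm$ leave at least $\eps_0$ of slack between $\mathbb{E}[f_\pm(Z)]$ and $\alpha\pm\beta/4$; this is routine since $\Phi$ changes by only $\approx 10^{-3}$ over unit intervals near $x=3,4$, which dominates $\beta/4$. The binding case is the upper threshold, where the margin $c_+ \approx 10^{-3}$ is already an order of magnitude larger than $\beta/4$, so any fixed $\eps_0$ below this margin (independent of $d$ and $\eps$) keeps the resulting sample-size demand within the data structure's budget.
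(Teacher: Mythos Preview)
Your proposal is correct and essentially identical to the paper's proof: the paper defines the same two piecewise-linear ramp functions (called $f$ and $g$ there), applies \cref{thm:lip_conc} to each, and uses the sandwich $\bm{1}\{x\le 2\}\le f(x)$ and $g(x)\le \bm{1}\{x\le 4\}$ exactly as you do. The only cosmetic difference is that the paper verifies the required margin analytically rather than numerically: it writes $\E[f(Z)]=\alpha-\int_2^3(u-2)\phi(u)\,du\le \alpha-\beta/2$ (since $\phi(u)\ge\phi(3)>\phi(4)=\beta$ on $[2,3]$) and similarly $\E[g(Z)]\ge \alpha+\beta/2$, then invokes \cref{thm:lip_conc} with accuracy $\beta/8$ instead of your $\eps_0=10^{-4}$.
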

\begin{proof}
    We first define the functions:
    \begin{equation*}
        f(u) \coloneqq 
        \begin{cases}
            1 &  u \leq 2 \\
            (3 - u) & 2 \leq u \leq 3 \\
            0 & \text{otherwise}
        \end{cases}
        \text{ and }
        g(x) \coloneqq 
        \begin{cases}
            1 &  u \leq 3 \\
            (4 - u) & 3 \leq u \leq 4 \\
            0 & \text{otherwise}
        \end{cases}.
    \end{equation*}
    Note that both $f$ and $g$ are $1$-Lipschitz functions of $u$ and therefore, an application of \cref{thm:lip_conc} yields:
    \begin{gather*}
        \forall z \in \R^d \text{ s.t } \norm{z} \leq 1: \abs*{\frac{1}{md}\cdot \sum_{p = 1}^m \sum_{q = 1}^d f(\wt{h}_{p,q} (z)) - \E_{Z \thicksim \mathcal{N} (0, \norm{z}^2)} [f(Z)]} \leq \frac{\beta}{8} \\
        \forall z \in \R^d \text{ s.t } \norm{z} \leq 1: \abs*{\frac{1}{md}\cdot \sum_{p = 1}^m \sum_{q = 1}^d g(\wt{h}_{p,q} (z)) - \E_{Z \thicksim \mathcal{N} (0, \norm{z}^2)} [g(Z)]} \leq \frac{\beta}{8}
    \end{gather*}
    with probability at least $1 - \delta / 4$. Now, let $z \in \R^d$ with $\norm{z} = 1$. We now have:
    \begin{align*}
        \frac{1}{md} \cdot \sum_{p = 1}^m \sum_{q = 1}^d \bm{1} \lbrb{\wt{h}_{p,q} (z) \leq 2} &\leq \frac{1}{md} \cdot \sum_{p = 1}^m \sum_{q = 1}^d f(\wt{h}_{p,q} (z)) \leq \E_{Z \thicksim \mc{N}(0, 1)} [f(Z)] + \frac{\beta}{8} \\
        &= \alpha + \frac{\beta}{8} - \int_{2}^3 (u - 2) \phi(u) du \leq \alpha + \frac{\beta}{8} - \frac{\beta}{2} < \alpha - \frac{\beta}{4}
    \end{align*}
    yielding the first inequality in the conclusion of the claim. For the second, we have:
    \begin{align*}
        \frac{1}{md} \cdot \sum_{p = 1}^m \sum_{q = 1}^d \bm{1} \lbrb{\wt{h}_{p,q} (z) \leq 4} &\geq \frac{1}{md} \cdot \sum_{p = 1}^m \sum_{q = 1}^d g(\wt{h}_{p,q} (z)) \geq \E_{Z \thicksim \mc{N}(0, 1)} [g(Z)] - \frac{\beta}{8} \\
        &= \alpha - \frac{\beta}{8} + \int_{3}^4 (4 - u) \phi(u) du \geq \alpha - \frac{\beta}{8} + \frac{\beta}{2} > \alpha + \frac{\beta}{4}
    \end{align*}
    yielding the second and concluding the proof of the claim.
\end{proof}

\begin{claim}
    \label{clm:trunc_conc}
    We have:
    \begin{gather*}
        \forall z \text{ s.t } \norm{z} = 1, r \geq 4 \sqrt{\log 1 / \eps}: \lprp{1 - \frac{\eps}{2}} \leq \frac{1}{md} \cdot \sqrt{\frac{\pi}{2}} \cdot \sum_{p = 1}^m \sum_{q = 1}^d \psi_r (\wt{h}_{p,q} (z)) \leq \lprp{1 + \frac{\eps}{2}}
    \end{gather*}
    with probability at least $1 - \delta / 4$.
\end{claim}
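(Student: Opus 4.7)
The plan is to apply Theorem~\ref{thm:lip_conc} at two carefully chosen $1$-Lipschitz functions, namely $f(x) = \abs{x}$ and $f(x) = \psi_{r_0}(x)$ for $r_0 \coloneqq 4\sqrt{\log 1/\eps}$, and then to handle every $r \geq r_0$ for free via the pointwise sandwich $\psi_{r_0}(x) \leq \psi_r(x) \leq \abs{x}$, valid for every $x \in \R$ and every $r \geq r_0$. This sandwich is the key observation: it lets us bypass a net-based union bound over the unbounded range of $r$, which would have been the main technical obstacle otherwise.

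First, since both $\abs{\cdot}$ and $\psi_{r_0}(\cdot)$ are $1$-Lipschitz, applying Theorem~\ref{thm:lip_conc} to each with accuracy parameter $\eps' \coloneqq \eps/(4\sqrt{\pi/2})$ and failure probability $\delta/8$, followed by a union bound, yields that with probability at least $1 - \delta/4$, uniformly over $z$ with $\norm{z} \leq 1$:
\begin{equation*}
    \abs*{\tfrac{1}{md}\sum_{p,q} \abs{\wt{h}_{p,q}(z)} - \sqrt{2/\pi}\,\norm{z}} \leq \eps' \quad \text{and} \quad \abs*{\tfrac{1}{md}\sum_{p,q} \psi_{r_0}(\wt{h}_{p,q}(z)) - \mu_{r_0}(\norm{z})} \leq \eps',
\end{equation*}
where $\mu_{r_0}(\sigma) \coloneqq \E_{Z \sim \mc{N}(0,\sigma^2)}[\psi_{r_0}(Z)]$ and I have used $\E[\abs{Z}] = \sqrt{2/\pi}\,\norm{z}$ for $Z \sim \mc{N}(0, \norm{z}^2)$. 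The sample complexity required by Theorem~\ref{thm:lip_conc} at accuracy $\eps'$ is absorbed by the $\wt{\Omega}(\eps^{-2}\log 1/\delta)$ setting of $m$ in the data structure (up to polylogarithmic factors).

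Finally, I control the truncation loss at $\norm{z} = 1$: writing $\mu_{r_0}(1) = \sqrt{2/\pi} - \E[(\abs{Z}-r_0)_+]$ and using Mills' ratio $1 - \Phi(r) \leq \phi(r)/r$ gives $\E[(\abs{Z}-r_0)_+] \leq 2\phi(r_0) \leq \sqrt{2/\pi}\cdot \eps^{8}$ after plugging in $r_0 = 4\sqrt{\log 1/\eps}$. Hence $\sqrt{\pi/2}\cdot \mu_{r_0}(1)$ lies within $\eps^{8}$ of $1$, while $\sqrt{\pi/2}\cdot \sqrt{2/\pi} = 1$ exactly. Invoking the sandwich pinches $\sqrt{\pi/2}\cdot (md)^{-1}\sum \psi_r(\wt{h}_{p,q}(z))$ between $\sqrt{\pi/2}\cdot (md)^{-1}\sum \psi_{r_0}(\wt{h}_{p,q}(z))$ and $\sqrt{\pi/2}\cdot (md)^{-1}\sum \abs{\wt{h}_{p,q}(z)}$, which by the two concentration inequalities above both lie in $[1 - \eps^{8} - \eps/4,\ 1 + \eps/4]$. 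For $\eps \in (0, 1/2)$ this interval is contained in $[1 - \eps/2,\ 1 + \eps/2]$, delivering the claim simultaneously over all $\norm{z} = 1$ and all $r \geq r_0$ on the same $1 - \delta/4$ event.
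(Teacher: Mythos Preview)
Your proposal is correct and follows essentially the same approach as the paper: apply \cref{thm:lip_conc} once to $\abs{\cdot}$ and once to $\psi_{r_0}$ with $r_0 = 4\sqrt{\log 1/\eps}$, bound the truncation bias via a Gaussian tail estimate, and then use the pointwise sandwich $\psi_{r_0} \leq \psi_r \leq \abs{\cdot}$ to cover all $r \geq r_0$ simultaneously. The only cosmetic differences are that the paper computes the truncation loss via $\int_{r_0}^\infty u\phi(u)\,du = e^{-r_0^2/2}$ directly rather than through Mills' ratio, and it applies \cref{thm:lip_conc} to the $\sqrt{\pi/2}$-Lipschitz function $\sqrt{\pi/2}\,\psi_r$ at accuracy $\eps/8$ rather than rescaling the accuracy as you do.
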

\begin{proof}
    Note that the function $\sqrt{\pi / 2} \cdot \psi_r(u)$ is $\sqrt{\pi / 2}$-Lipschitz in $u$ for any $r$. Defining $r^* \coloneqq 4 \sqrt{\log 1 / \eps}$, two applications of \cref{thm:lip_conc} yield:
    \begin{gather*}
        \forall z \in \R^d \text{ s.t } \norm{z} \leq 1: \abs*{\frac{1}{md}\cdot \sqrt{\frac{\pi}{2}}\cdot \sum_{p = 1}^m \sum_{q = 1}^d \psi_{r^*}(\wt{h}_{p,q} (z)) - \sqrt{\frac{\pi}{2}} \cdot \E_{Z \thicksim \mathcal{N} (0, \norm{z}^2)} [\psi_{r^*} (Z)]} \leq \frac{\eps}{8} \\
        \forall z \in \R^d \text{ s.t } \norm{z} \leq 1: \abs*{\frac{1}{md}\cdot \sqrt{\frac{\pi}{2}} \cdot \sum_{p = 1}^m \sum_{q = 1}^d \abs{\wt{h}_{p,q} (z)} - \sqrt{\frac{\pi}{2}} \cdot \E_{Z \thicksim \mathcal{N} (0, \norm{z}^2)} [\abs{Z}]} \leq \frac{\eps}{8}
    \end{gather*}
    with probability at least $1 - \delta / 4$. Now, fix $z \in \R^d$ with $\norm{z} = 1$. We now have:
    \begin{gather*}
        \sqrt{\frac{\pi}{2}} \cdot \E_{Z \thicksim \mc{N} (0, 1)} \lsrs{\abs{Z}} = 1  \text{ and } \sqrt{\frac{\pi}{2}} \cdot \E_{Z \thicksim \mc{N} (0, 1)} \lsrs{\psi_{r^*} (Z)} \leq \sqrt{\frac{\pi}{2}} \cdot \E_{Z \thicksim \mc{N} (0, 1)} \lsrs{\abs{Z}} = 1.
    \end{gather*}
    We now derive a lower bound on $\E_{Z \thicksim \mc{N} (0, 1)} [\psi_{r^*} (Z)]$ below:
    \begin{align*}
        \sqrt{\frac{\pi}{2}} \cdot \E_{Z \thicksim \mc{N} (0, 1)} \lsrs{\psi_{r^*} (Z)} &\geq \sqrt{\frac{\pi}{2}} \cdot \E_{Z \thicksim \mc{N} (0, 1)} \lsrs{\abs{Z}} - \sqrt{\frac{\pi}{2}} \int_{(-\infty, -r^*] \cup [r^*, \infty)} \abs{u} \phi (u) du \\
        &= 1 - \sqrt{2\pi} \int_{r^*}^\infty u \phi(u) du = 1 - \exp \lbrb{- \frac{(r^*)^2}{2}} \geq 1 - \frac{\eps}{8}.
    \end{align*}
    The previous three displays now yield:
    \begin{gather*}
        \forall z \in \R^d \text{ s.t } \norm{z} = 1: \lprp{1 - \frac{\eps}{4}} \leq \frac{1}{md}\cdot \sqrt{\frac{\pi}{2}}\cdot \sum_{p = 1}^m \sum_{q = 1}^d \psi_{r^*}(\wt{h}_{p,q} (z)) \leq \lprp{1 + \frac{\eps}{4}} \\
        \forall z \in \R^d \text{ s.t } \norm{z} = 1: \lprp{1 - \frac{\eps}{4}} \leq \frac{1}{md}\cdot \sqrt{\frac{\pi}{2}}\cdot \sum_{p = 1}^m \sum_{q = 1}^d \abs{\wt{h}_{p,q} (z)} \leq \lprp{1 + \frac{\eps}{4}}
    \end{gather*}
    which imply the claim by noting that for any $r \geq r^*$, $\psi_{r^*} (x) \leq \psi_{r} (x) \leq \abs{x}$.
\end{proof}

For the rest of the proof, we will condition the conclusions of \cref{clm:cdf_conc,clm:trunc_conc}. We will now analyze the correctness of the query procedure. We first prove a correctness guarantee on estimated truncation levels $r_i$.

\begin{claim}
    \label{clm:trunc_est}
    Conditioned on \cref{clm:cdf_conc,clm:trunc_conc}, we have:
    \begin{equation*}
        \forall i \in [n]: 2 \norm{x - x_i} \leq \quant_\alpha \lprp{\lbrb{y_{l_p} - (y_i)_{l_p}}_{p \in [k]}} \leq 4 \norm{x - x_i}
    \end{equation*}
    with probability at least $1 - \delta / 4$ over the random indices $\{l_j\}_{j \in [k]}$.
\end{claim}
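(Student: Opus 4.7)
The plan is to reduce the problem to a statement about a single unit vector and then invoke \cref{clm:cdf_conc} together with a standard sample-quantile concentration argument. First, by linearity of the \ref{eq:RHT} mapping, $y_{l_p} - (y_i)_{l_p} = \wt{h}_{l_p}(x - x_i)$ where we abuse notation and write $\wt{h}_{l_p}$ for the coordinate of $\wt{h}$ indexed by $l_p$. Setting $w_i = x - x_i$ and (assuming $w_i \neq 0$; the degenerate case is trivial) $z_i = w_i / \norm{w_i}$, we obtain $\wt{h}_{l_p}(w_i) = \norm{w_i} \cdot \wt{h}_{l_p}(z_i)$. Since multiplication by a positive scalar preserves quantiles, it suffices to show that for every $i \in [n]$,
\begin{equation*}
    2 \leq \quant_{\alpha}\lprp{\lbrb{\wt{h}_{l_p}(z_i)}_{p \in [k]}} \leq 4.
\end{equation*}

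By \cref{clm:cdf_conc}, we already know that for every unit vector $z$ the \emph{population} quantiles satisfy
\begin{equation*}
    2 \leq \quant_{\alpha - \beta/4}\lprp{\lbrb{\wt{h}_{p,q}(z)}_{p,q}} \leq \quant_{\alpha + \beta/4}\lprp{\lbrb{\wt{h}_{p,q}(z)}_{p,q}} \leq 4,
\end{equation*}
so it is enough to show that the \emph{sample} $\alpha$-quantile taken over the $k$ uniformly random coordinates $\{l_p\}$ falls inside the interval $[\quant_{\alpha - \beta/4},\, \quant_{\alpha + \beta/4}]$ for each $z_i$. Let $t_- = \quant_{\alpha - \beta/4}$ and $t_+ = \quant_{\alpha + \beta/4}$ of the full set $\{\wt{h}_{p,q}(z_i)\}$. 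The fraction of the full set lying at or below $t_-$ is at most $\alpha - \beta/4$, and at or below $t_+$ is at least $\alpha + \beta/4$. Since the $l_p$ are i.i.d.\ uniform on $[md]$, the number of sampled coordinates $\leq t_\pm$ is a sum of $k$ independent Bernoullis, and Hoeffding's inequality gives that the empirical fraction is within $\beta/8$ of the true one with probability at least $1 - 2\exp\lbrb{-k\beta^2 / 32}$.

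Taking a union bound over the two thresholds and the $n$ data points, for $k = \distEstSamps$ chosen with the polylog factor (hidden in $\wt{\Omega}$) large enough to dominate $\log(n/\delta)/\beta^2$ (recall $\beta = \phi(4)$ is an absolute constant), both empirical fractions lie within $\beta/8$ of the true ones for all $i \in [n]$ with probability at least $1 - \delta/4$. On this event, strictly fewer than $\alpha k$ samples are $\leq t_-$ and strictly more than $\alpha k$ samples are $\leq t_+$, so the sample $\alpha$-quantile of $\{\wt{h}_{l_p}(z_i)\}$ lies in $[t_-, t_+] \subseteq [2, 4]$. Multiplying by $\norm{w_i}$ yields the claim.

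The only mildly subtle step is the sample-to-population quantile comparison, but once one realizes that it can be handled by fixing the two deterministic thresholds $t_-, t_+$ (depending on $z_i$ but not on the random subsample) and applying Hoeffding to the indicator variables, the argument reduces to routine concentration and a union bound over $n$; no additional ideas beyond \cref{clm:cdf_conc} are needed.
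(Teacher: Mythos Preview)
Your proposal is correct and follows essentially the same route as the paper: normalize by $\norm{x-x_i}$, fix two deterministic thresholds coming from \cref{clm:cdf_conc}, apply Hoeffding to the corresponding indicator variables over the $k$ uniformly sampled coordinates, and union bound over $i\in[n]$. The only cosmetic difference is that the paper takes the thresholds to be the explicit values $2$ and $4$ (equivalently $2\norm{x-x_i}$ and $4\norm{x-x_i}$ before normalizing), for which the proof of \cref{clm:cdf_conc} directly yields the indicator bounds $\E[W_j]\le\alpha-\beta/4$ and $\E[V_j]\ge\alpha+\beta/4$; you instead pass through the derived population quantiles $t_-,t_+$, which introduces a small definitional subtlety (for the empirical quantile $t_-=\quant_{\alpha-\beta/4}$ the fraction \emph{at or below} $t_-$ is in general at least, not at most, $\alpha-\beta/4$). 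Using $2$ and $4$ directly sidesteps this wrinkle, but the argument is otherwise identical.
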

\begin{proof}
    Fix $i \in [n]$ and for all $j \in [k]$, define the random variables $W_j, V_j$:
    \begin{equation*}
        W_j \coloneqq \bm{1} \lbrb{(y_{l_j} - (y_i)_{l_j}) \leq 2 \norm{x - x_i}} \text{ and } V_j \coloneqq \bm{1} \lbrb{(y_{l_j} - (y_i)_{l_j}) \leq 4 \norm{x - x_i}}.
    \end{equation*}
    From the linearity of $\wt{h}$ and \cref{clm:cdf_conc}, we get:
    \begin{equation*}
        \E [W_j] \leq \alpha - \frac{\beta}{4} \text{ and } \E [V_j] \geq \alpha + \frac{\beta}{4}.
    \end{equation*}
    An application of Hoeffding's inequaity to the random variables $W_j, V_j$ with our setting of $k$ yields:
    \begin{equation*}
        \frac{1}{k}\cdot \sum_{j = 1}^k W_j \leq \alpha - \frac{\beta}{8} \text{ and } \frac{1}{k} \cdot \sum_{j = 1}^k V_j \geq \alpha + \frac{\beta}{8}
    \end{equation*}
    with probability at least $1 - \delta / (4n)$. On the above event, we have:
    \begin{equation*}
        2 \norm{x - x_i} \leq \quant_\alpha \lprp{\lbrb{y_{l_p} - (y_i)_{l_p}}_{p \in [k]}} \leq 4 \norm{x - x_i}.
    \end{equation*}
    A union bound over $i \in [n]$, concludes the proof of the claim.
\end{proof}

To conclude the proof, fix $i \in [n]$ and let $\wt{r} \coloneqq 4 \sqrt{\log 1 / \eps} \cdot \norm{x - x_i}$ and $\wh{r} \coloneqq 8 \sqrt{\log 1 / \eps} \cdot \norm{x - x_i}$. Noting that $\psi_r (u) \leq r$ for all $u \in \R, r > 0$, we have from Hoeffding's inequality, \cref{clm:trunc_conc} and our setting of $k$ that:
\begin{gather*}
    \frac{1}{k} \cdot \sqrt{\frac{\pi}{2}} \cdot \sum_{p = 1}^k \psi_{\wt{r}} ((y_{l_p} - (y_i)_{l_p})) \geq \lprp{1 - \eps} \cdot \norm{x - x_i} \\
    \frac{1}{k} \cdot \sqrt{\frac{\pi}{2}} \cdot \sum_{p = 1}^k \psi_{\wh{r}} ((y_{l_p} - (y_i)_{l_p})) \leq \lprp{1 + \eps} \cdot \norm{x - x_i}
\end{gather*}
with probability at least $1 - \delta / 4n$. A union bound yields the above condition for all $i \in [n]$ with probability at least $1 - \delta / 4$. Conditioning on the above display and \cref{clm:trunc_est,clm:trunc_conc,clm:cdf_conc} and noting that on this event $\psi_{\wt{r}} (u) \leq \psi_{r_i} (u) \leq \psi_{\wh{r}} (u)$ for all $u \in \R$, we get via a union bound:
\begin{equation*}
    \forall i \in [n]: (1 - \eps) \cdot \norm{x - x_i} \leq d_i \leq (1 + \eps) \cdot \norm{x - x_i}.
\end{equation*}
with probability at least $1 - \delta$. 

The runtime guarantee follow from the fact that for each $j \in [m]$, $\wt{h}^j(z)$ is computable in time $O(d\log d)$ for all $z \in \R^d$. This concludes the proof of the theorem up to routine runtime analyses.
\qed

\bibliographystyle{alpha}
\bibliography{adaptive}

\pagebreak
\appendix

\section{Optimality of \cref{thm:lip_conc}}
\label{sec:optimality}

In this section, we show that the conclusions of \cref{thm:lip_conc} are tight up to log factors and furthermore, that the embedding dimension required is within a logarithmic factor of that required by the use of full Gaussian matrices. We start by establishing the near optimality of conclusion of \cref{thm:lip_conc}:

\begin{theorem}
    \label{thm:m_lb}
    Let $\wt{h}, \wt{h}^j, \wt{h}_{j,k}$ be as in \ref{eq:RHT}, $\eps \in (0, 1)$ and $\delta \in (0, 0.01)$. Then, there exists a $1$-Lipschitz function, $f: \R \to \R$, such that:
    \begin{equation*}
        \exists z \in \R^d \text{ s.t } \norm{z} \leq 1: \abs*{\frac{1}{md} \cdot \sum_{j = 1}^m \sum_{k = 1}^d f(\wt{h}_{j,k} (z)) - \E_{Z \thicksim \mc{N} (0, \norm{z}^2)} \lsrs{f(Z)}} \geq \eps
    \end{equation*}
    with probability at least $\delta$ if $m \leq \eps^{-2} \log d / \delta$.
\end{theorem}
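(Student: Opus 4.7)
The plan is to pick a single explicit $1$-Lipschitz even function and use the standard basis vectors $e_1, \ldots, e_d$ as the $d$ ``witnesses'', thereby reducing the uniform concentration claim to a single-coordinate anti-concentration bound combined with a boosting argument across $d$ independent coordinates. Concretely, take $f(x) := \abs{x}$: it is $1$-Lipschitz, even, and has $\mu := \E_{Z \thicksim \mc{N}(0, 1)} \abs{Z} = \sqrt{2 / \pi}$ and $\sigma^2 := \mathrm{Var}_{Z \thicksim \mc{N}(0, 1)} \abs{Z} = 1 - 2/\pi > 0$, both absolute constants. Since $D^j e_i = D^j_{i, i} e_i$, we have $\wt{h}^j(e_i) = D^j_{i, i} \cdot H_{:, i}$ and every entry of this vector is $\pm D^j_{i, i}$ because $H$ has $\pm 1$ entries; evenness of $f$ then gives $f(\wt{h}_{j, k}(e_i)) = \abs{D^j_{i, i}}$ for every $k$, so
\begin{equation*}
  X_i \coloneqq \frac{1}{md} \sum_{j = 1}^m \sum_{k = 1}^d f(\wt{h}_{j, k}(e_i)) = \frac{1}{m} \sum_{j = 1}^m \abs{D^j_{i, i}}.
\end{equation*}

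The crucial observation is that $X_i$ depends only on $\lbrb{D^j_{i, i}}_{j \in [m]}$, and these families of Gaussians are disjoint across $i$, so $X_1, \ldots, X_d$ are \emph{mutually independent}; each is the empirical mean of $m$ i.i.d.\ copies of $\abs{Z}$ with mean $\mu$ and variance $\sigma^2$. The next step is a single-coordinate anti-concentration lower bound: for a small absolute constant $c > 0$ depending only on $\sigma$, and $m \leq c \eps^{-2} \log(d / \delta)$,
\begin{equation*}
  p \coloneqq \P[X_1 - \mu \geq \eps] \geq \frac{2 \delta}{d}.
\end{equation*}
Heuristically, the CLT makes $X_1 - \mu$ look like $\mc{N}(0, \sigma^2 / m)$, so standard Gaussian tails give $\P[X_1 - \mu \geq \eps] \approx \exp(-\eps^2 m / (2 \sigma^2))$; taking $c$ small enough makes this at least $2 \delta / d$. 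To make this rigorous I would invoke a one-sided version of Berry--Esseen (noting that $\E\abs*{\abs{Z} - \mu}^3$ is a finite absolute constant) or, more cleanly, the lower half of Cram\'er's theorem for i.i.d.\ sums with a finite moment-generating function, which gives $\P[\bar{Y}_m - \mu \geq \eps] \geq c_1 \exp(-C_1 \eps^2 m / \sigma^2)$ without any additive $1 / \sqrt{m}$ error term.

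With single-coordinate anti-concentration in hand, the theorem follows by independence: using $p \geq 2 \delta / d$ and $\delta \leq 0.01$,
\begin{equation*}
  \P\lsrs{\max_{i \in [d]} \abs{X_i - \mu} < \eps} \leq \P\lsrs{\max_{i \in [d]} (X_i - \mu) < \eps} = (1 - p)^d \leq e^{-p d} \leq e^{-2 \delta} \leq 1 - \delta,
\end{equation*}
so with probability at least $\delta$ some $z = e_i$ witnesses a deviation of at least $\eps$ from its expectation, which is exactly the required conclusion.

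The main obstacle is making the anti-concentration step fully rigorous across all regimes of $(m, \eps, d, \delta)$ permitted by the theorem. The na\"ive Berry--Esseen bound carries an additive $O(1/\sqrt{m})$ error that could overwhelm the target Gaussian tail precisely in the ``moderate deviation'' regime where $\eps \sqrt{m}$ is comparable to $\sqrt{\log(d / \delta)}$, and when $m$ is small one has to fall back on the fact that a single $\abs{Z}$ already deviates from its mean with constant probability on any constant scale. The cleanest remedy is a purely multiplicative reverse-Chernoff lower bound of the form $\P[\bar{Y}_m - \mu \geq \eps] \gtrsim \exp(-C \eps^2 m / \sigma^2)$, which handles small and large $m$ uniformly and directly yields $p \geq 2 \delta / d$ under the stated constraint on $m$.
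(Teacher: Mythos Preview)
Your overall strategy is the paper's: plug in the basis vectors $e_i$, reduce the empirical average to a statistic depending only on $\{D^j_{i,i}\}_{j\in[m]}$, then boost across the $d$ independent coordinates. The substantive difference is the choice of $f$. The paper takes $f(x)=x$, writing $\frac{1}{md}\sum_{j,k}\wt h_{j,k}(e_i)=\frac1m\sum_j D^j_{i,i}\sim\mc N(0,1/m)$; this turns the per-coordinate anti-concentration into a one-line standard Gaussian lower-tail bound $\P\{|Z|\geq x\}\geq\frac12(1/x-1/x^3)e^{-x^2/2}$ and delivers the stated constant $m\le\eps^{-2}\log(d/\delta)$ directly, with no Berry--Esseen or Cram\'er machinery. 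Your choice $f(x)=|x|$ trades that cleanliness for robustness of the reduction: since $\wt h_{j,k}(e_i)=H_{k,i}D^j_{i,i}$ and $|H_{k,i}|=1$, your identity $X_i=\frac1m\sum_j|D^j_{i,i}|$ holds for \emph{every} $i\in[d]$, whereas with $f(x)=x$ one has $\frac1d\sum_k H_{k,i}=\bm 1\{i=1\}$ for the Sylvester Hadamard, so the paper's displayed identity is in fact only correct at $i=1$ and the independence-across-$d$-coordinates step does not go through as written there. In that sense your evenness trick is the natural repair. The price is exactly the obstacle you flag: a uniform-in-$m$ reverse-Chernoff lower bound for averages of half-normals. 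Such bounds are available (exponential tilting is clean here since $|Z|$ has an explicit MGF; ``Cram\'er's theorem'' as usually stated is asymptotic, so you do need the non-asymptotic tilting version), but the resulting constant $C_1$ will not be $1$, so your argument proves the theorem only with $m\le c\,\eps^{-2}\log(d/\delta)$ for some absolute $c>0$. For the paper's purpose---showing \cref{thm:lip_conc} is tight up to log factors---that suffices.
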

\begin{proof}
    In our construction, we simply let $f(x) \coloneqq x$ and we pick $z \coloneqq e_i$, the basis vectors. Now, we see:
    \begin{align*}
        \frac{1}{md} \cdot \sum_{j = 1}^m \sum_{k = 1}^d f(\wt{h}_{j,k} (e_i)) &= \frac{1}{m} \cdot \sum_{j = 1}^m D^j_{i,i} \eqqcolon W_i \overset{iid}{\thicksim} \mc{N} (0, 1 / m).
    \end{align*}
    By noting that for $Z \thicksim \mc{N} (0, 1)$, $\P \lbrb{\abs{Z} \geq x} \geq \frac{1}{2} \cdot \lprp{\frac{1}{x} - \frac{1}{x^3}} \cdot \exp \lprp{- \frac{x^2}{2}}$, we get:
    \begin{equation*}
        \P \lbrb{\abs*{W_i} \geq \eps} \geq 1.5 \cdot \frac{\delta}{d} \implies \P \lbrb{\exists i \in [d]: \abs*{W_i} \geq \eps} \geq 1 - \lprp{1 - 1.5 \cdot \frac{\delta}{d}}^d \geq \delta.
    \end{equation*}
    The theorem now follows from the observation that $\E_{Z \thicksim \mc{N} (0, \sigma^2)} [f(Z)] = 0$ for any $\sigma^2$.
\end{proof}

We now show that the embedding dimension guaranteed by \cref{thm:lip_conc} are within a logarithmic factor of the best obtainable dimension even in the setting where true Gaussian matrices are used. 

\begin{theorem}
    \label{thm:gau_lb}
    Let $\{g_i\}_{i = 1}^n$ be $n$ i.i.d random vectors such that $g_i \thicksim \mc{N} (0, I)$, $\eps \in (0, 1)$ and $d \geq 40$. Then, there exists a $1$-Lipschitz function, $f: \R \to \R$, such that:
    \begin{equation*}
        \exists z \in \R^d \text{ s.t } \norm{z} \leq 1: \abs*{\frac{1}{n} \sum_{i = 1}^n f(\inp{z}{g_i}) - \E_{Z \thicksim \mc{N} (0, \norm{z}^2)} [f(Z)]} \geq \eps
    \end{equation*}
    with probability at least $9 / 10$ as long as $n \leq (2\eps)^{-2} d$.
\end{theorem}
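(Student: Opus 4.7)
The plan is to use the linear function $f(x) = x$, which is trivially $1$-Lipschitz, and to let $z$ range over the unit ball to extract the worst case directly. With this choice, $\E_{Z \sim \mc{N}(0,\norm{z}^2)}[f(Z)] = 0$, so the inequality we need to exhibit reduces to finding some $z$ with $\norm{z} \le 1$ such that
\begin{equation*}
    \abs*{\frac{1}{n} \sum_{i=1}^n \inp{z}{g_i}} \;=\; \abs*{\inp{z}{\bar g}} \;\ge\; \eps,
    \qquad \bar g \coloneqq \frac{1}{n}\sum_{i=1}^n g_i.
\end{equation*}
Taking the supremum over $\norm{z} \le 1$ turns the left-hand side into $\norm{\bar g}$, so it suffices to show $\norm{\bar g} \ge \eps$ with probability at least $9/10$.

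Next I would exploit that $\bar g \sim \mc{N}(0, I/n)$ and hence $n \norm{\bar g}^2 \sim \chi^2_d$. Thus $\E[\norm{\bar g}^2] = d/n$, and the assumption $n \le (2\eps)^{-2} d$ gives $d/n \ge 4\eps^2$. It then remains to show that the chi-squared random variable $X \sim \chi^2_d$ satisfies $X \ge d/2$ with probability at least $9/10$ whenever $d \ge 40$; this yields $\norm{\bar g}^2 \ge d/(2n) \ge 2\eps^2 > \eps^2$, completing the argument.

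The only nontrivial step is the chi-squared lower tail. I would invoke the standard Laurent--Massart bound $\P(\chi^2_d \le d - 2\sqrt{dx}) \le e^{-x}$ with $x = \log 10$. For $d \ge 40$, $d - 2\sqrt{d \log 10} > d/2$, which delivers exactly $\P(X \ge d/2) \ge 9/10$ and closes the argument. (If one wanted to avoid citing Laurent--Massart, a more elementary route is Cantelli's inequality applied to $X$ with mean $d$ and variance $2d$; this still gives the required tail for sufficiently large constant $d$, and the threshold $d \ge 40$ in the hypothesis is precisely there to absorb such constants.) No obstacle beyond this tail bound arises, since the reduction to $\norm{\bar g}$ via the choice $f(x) = x$ is exact and requires no discretization or net argument.
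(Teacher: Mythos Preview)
Your proposal is correct and follows essentially the same approach as the paper: take $f(x)=x$, reduce the supremum over $\norm{z}\le 1$ to $\norm{\bar g}$ with $\bar g\sim\mc{N}(0,I/n)$, and then use concentration to show $\norm{\bar g}\ge\eps$ with probability at least $9/10$. The only cosmetic difference is the tail bound used in the last step: the paper invokes Gaussian Lipschitz concentration (\cref{thm:tsirelson}) directly on $\norm{\bar g}$, whereas you pass to $n\norm{\bar g}^2\sim\chi^2_d$ and apply Laurent--Massart; both routes are standard and the $d\ge 40$ hypothesis absorbs the constants in either case.
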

\begin{proof}
    As in \cref{thm:m_lb}, we let $f(x) \coloneqq x$. Observe that we now have:
    \begin{equation*}
        \max_{z \text{ s.t } \norm{z} \leq 1} \frac{1}{n} \sum_{i = 1}^n f(\inp{z}{g_i}) = \max_{z \text{ s.t } \norm{z} \leq 1} \inp*{z}{\frac{1}{n} \cdot \sum_{i = 1}^n g_i} = \norm*{\frac{1}{n} \cdot \sum_{i = 1}^n g_i}.
    \end{equation*}
    Letting $g \coloneqq \frac{1}{n} \cdot \sum_{i = 1}^n g_i$, we see that $g \thicksim \mc{N} (0, I / n)$ and hence, we get from \cref{thm:tsirelson} that:
    \begin{equation*}
        \P \lbrb{\norm*{\frac{1}{n} \cdot \sum_{i = 1}^n g_i} \geq \eps} \geq \frac{9}{10}.
    \end{equation*}
    The theorem now follows from the fact that $\E_{Z \thicksim \mc{N} (0, \sigma^2)} [f(Z)] = 0$ for all $\sigma^2 > 0$.
\end{proof}
\section{Miscellaneous Results and Supporting Lemmas}
\label{sec:misc}

In this section, we recall some standard facts from probability theory and some lemmas used in the proofs our main results. The first is the Tsirelson-Ibragimov-Sudakov inequality (see \cite[Theorem 5.6]{blm}, for example):

\begin{theorem}{\cite[Theorem 5.6]{blm}}
    \label{thm:tsirelson}
    Let $X = (X_1, \dots, X_n)$ be a vector of $n$ independent standard normal random variables. Let $f: \mb{R}^n \to \mb{R}$ be an $L$-Lipschitz function; that is, $f$ satisfies:
    \begin{equation*}
        \forall x, y \in \R^d: \abs{f(x) - f(y)} \leq L \cdot \norm{x - y}.
    \end{equation*}
    Then, we have:
    \begin{equation*}
        \P \lbrb{f(X) - \E f(X) \geq t} \leq e^{-t^2 / (2L^2)}.
    \end{equation*}
\end{theorem}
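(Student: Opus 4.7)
The plan is to prove the Tsirelson-Ibragimov-Sudakov inequality via the entropy method, specifically Herbst's argument, taking as a black box the Gaussian logarithmic Sobolev inequality: for a smooth $g: \R^n \to \R$ and $X \sim \mc{N}(0, I_n)$, one has $\mrm{Ent}(g^2) := \E[g^2 \log g^2] - \E[g^2]\log \E[g^2] \leq 2 \E[\norm{\nabla g}^2]$. This can be established by tensorization from the one-dimensional case or via the Ornstein--Uhlenbeck semigroup; either route is standard and I would invoke it as a tool.

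By a standard mollification argument, I may assume $f$ is smooth with $\norm{\nabla f(x)} \leq L$ pointwise; the general $L$-Lipschitz case then follows by a limiting argument. Set $Z := f(X) - \E f(X)$ and $H(\lambda) := \log \E[e^{\lambda Z}]$ for $\lambda \geq 0$. Applying the log-Sobolev inequality to $g = e^{\lambda Z / 2}$ and noting $\nabla g = (\lambda/2)\, \nabla f(X)\, e^{\lambda Z / 2}$, the right-hand side becomes $\tfrac{\lambda^2}{2} \E[\norm{\nabla f(X)}^2 e^{\lambda Z}] \leq \tfrac{\lambda^2 L^2}{2} \E[e^{\lambda Z}]$, while the left-hand side equals $\lambda \E[Z e^{\lambda Z}] - \E[e^{\lambda Z}] \log \E[e^{\lambda Z}]$.

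Rewriting in terms of $H$, this rearranges to the differential inequality $\lambda H'(\lambda) - H(\lambda) \leq \lambda^2 L^2 / 2$, i.e.\ $\bigl(H(\lambda)/\lambda\bigr)' \leq L^2 / 2$. Using $\lim_{\lambda \to 0^+} H(\lambda)/\lambda = \E[Z] = 0$, integration from $0$ to $\lambda$ gives $H(\lambda) \leq \lambda^2 L^2 / 2$, so $Z$ is $L$-subgaussian. A Chernoff bound $\P(Z \geq t) \leq e^{-\lambda t + \lambda^2 L^2 / 2}$, optimized at $\lambda = t/L^2$, yields the claimed tail $\P(f(X) - \E f(X) \geq t) \leq e^{-t^2 / (2L^2)}$.

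The main obstacle is the Gaussian log-Sobolev inequality itself, which is the deep ingredient (Gross's theorem); the remainder is essentially a Chernoff-style calculation. An alternative approach would be to deduce the bound directly from the Gaussian isoperimetric inequality of Borell and Sudakov--Tsirelson, which yields the same constants and is arguably more geometric, but the entropy method is cleaner once log-Sobolev is in hand and is the route I would take.
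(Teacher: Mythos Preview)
Your proof via Herbst's argument and the Gaussian log-Sobolev inequality is correct and is indeed the standard route (and the one taken in the cited reference \cite[Theorem~5.6]{blm}). However, the paper itself does not prove this theorem at all: it is stated in the appendix as a ``standard fact from probability theory'' and simply quoted from \cite{blm} without proof, so there is no paper proof to compare against. Your write-up would serve perfectly well as a self-contained justification if one were desired.
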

We have the following simple corollary.
\begin{corollary}
    \label{cor:tsirelson_two_sided}
    Let $X = (X_1, \dots, X_n)$ be a vector of $n$ independent standard normal random variables. Let $f: \mb{R}^n \to \mb{R}$ be an $L$-Lipschitz function; that is, $f$ satisfies:
    \begin{equation*}
        \forall x, y \in \R^d: \abs{f(x) - f(y)} \leq L \cdot \norm{x - y}.
    \end{equation*}
    Then, we have:
    \begin{equation*}
        \P \lbrb{\abs{f(X) - \E f(X)} \geq t} \leq 2e^{-t^2 / (2L^2)}.
    \end{equation*}
\end{corollary}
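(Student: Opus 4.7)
The plan is to deduce the two-sided bound from the one-sided bound in \cref{thm:tsirelson} by applying that theorem twice, once to $f$ itself and once to $-f$, and then taking a union bound.

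First I would apply \cref{thm:tsirelson} directly to $f$ to obtain $\P\lbrb{f(X) - \E f(X) \geq t} \leq e^{-t^2/(2L^2)}$. Next, observe that the function $g \coloneqq -f$ is also $L$-Lipschitz, since for all $x,y \in \R^n$ we have $\abs{g(x) - g(y)} = \abs{f(y) - f(x)} \leq L \norm{x - y}$. Also, $\E g(X) = -\E f(X)$, so \cref{thm:tsirelson} applied to $g$ yields
\begin{equation*}
    \P\lbrb{-f(X) + \E f(X) \geq t} = \P\lbrb{g(X) - \E g(X) \geq t} \leq e^{-t^2/(2L^2)},
\end{equation*}
which is precisely the lower-tail bound $\P\lbrb{f(X) - \E f(X) \leq -t} \leq e^{-t^2/(2L^2)}$.

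Finally, since $\lbrb{\abs{f(X) - \E f(X)} \geq t} = \lbrb{f(X) - \E f(X) \geq t} \cup \lbrb{f(X) - \E f(X) \leq -t}$, a union bound over the two previously established tail estimates gives $\P\lbrb{\abs{f(X) - \E f(X)} \geq t} \leq 2 e^{-t^2/(2L^2)}$, completing the proof. There is no real obstacle here: the only observation needed beyond a mechanical invocation of \cref{thm:tsirelson} is that Lipschitz continuity is preserved under negation, which is immediate from the definition.
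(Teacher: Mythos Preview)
Your proposal is correct and follows exactly the paper's approach: apply \cref{thm:tsirelson} to $-f$ (which is also $L$-Lipschitz) to get the lower tail, and then take a union bound with the upper tail coming from $f$ itself.
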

\begin{proof}
    The proof follows by applying \cref{thm:tsirelson} to $-f$ and a union bound.
\end{proof}

We now recall the definition of an $\eps$-net and a Covering number from \cite{vershynin}. 
\begin{definition}{\cite[Definition 4.2.1]{vershynin}}
    \label{def:eps_net}
    Let $(T, d)$ be a metric space. Consider a subset $K \subset T$ and let $\eps > 0$. A subset $\mc{N} \subseteq K$ is called an \emph{$\eps$-net} of $K$ if every point in $K$ is within a distance $\eps$ of some point in $\mc{N}$, i.e
    \begin{equation*}
        \forall x \in K\ \exists x_0 \in \mc{N}: d(x, x_0) \leq \eps.
    \end{equation*}

    Additionally, the smallest possible cardinality of an $\eps$-net of $K$ is called a \emph{covering number} of $K$ and is denoted $\mc{N} (K, d, \eps)$. 
\end{definition}
We now introduce a standard proposition bounding covering numbers of subsets of Euclidean space.
\begin{proposition}{\cite[Proposition 4.2.12]{vershynin}}
    \label{prop:cov_bnd}
    Let $K$ be a subset of $\R^d$ and let $\eps > 0$. Then:
    \begin{equation*}
        \frac{\abs{K}}{\abs{\eps B_2^n}} \leq \mc{N} (K, \norm{\cdot}_2, \eps) \leq \frac{\abs{K + (\eps / 2) B_2^n}}{\abs{(\eps / 2) B_2^n}}.
    \end{equation*}
    Here, $\abs{\cdot}$ denotes the volume in $\R^n$, $B_2^n$ denotes the unit Euclidean ball in $\R^n$; so, $\eps B_2^n$ is an Euclidean ball with radius $\eps$. 
\end{proposition}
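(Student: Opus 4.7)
The plan is to derive both inequalities via a volumetric comparison argument, exploiting the standard duality between covering and packing numbers. The key observation is that a maximal $\eps$-separated subset of $K$ is automatically an $\eps$-net of $K$, which lets me sandwich the covering number by a packing number that is amenable to a direct volume computation.

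For the upper bound, I would select a maximal $\eps$-separated collection $\mc{N} = \{x_1, \ldots, x_N\} \subseteq K$ via a standard greedy/Zorn argument (any two distinct points in $\mc{N}$ have distance at least $\eps$, and no point in $K$ can be added without violating this). By maximality, every $x \in K$ lies within distance $\eps$ of some $x_i$; otherwise $\mc{N} \cup \{x\}$ would still be $\eps$-separated. Hence $\mc{N}$ is an $\eps$-net of $K$, giving $\mc{N}(K, \norm{\cdot}_2, \eps) \leq N$. Next, the translated open balls $x_i + (\eps/2) B_2^n$ are pairwise disjoint by the triangle inequality (centers are $\eps$-separated), and each one is contained in $K + (\eps/2) B_2^n$ since $x_i \in K$. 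Taking Lebesgue measure and using translation invariance yields
\begin{equation*}
N \cdot \abs{(\eps / 2) B_2^n} = \sum_{i=1}^N \abs{x_i + (\eps/2) B_2^n} \leq \abs{K + (\eps/2) B_2^n},
\end{equation*}
which rearranges into the upper half of the stated inequality.

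For the lower bound, I would take any $\eps$-net $\{x_1, \ldots, x_M\}$ achieving the minimum $M = \mc{N}(K, \norm{\cdot}_2, \eps)$. By definition of an $\eps$-net we have $K \subseteq \bigcup_{i=1}^M (x_i + \eps B_2^n)$, so monotonicity and subadditivity of volume give $\abs{K} \leq M \cdot \abs{\eps B_2^n}$, which is exactly the lower bound in the proposition.

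The only mild obstacle is the existence of a \emph{maximal} $\eps$-separated subset in the upper bound argument; for subsets of $\R^n$ this is routine, either via Zorn's lemma applied to the poset of $\eps$-separated subsets ordered by inclusion, or by a direct greedy construction after reducing to a bounded subset (one may intersect with a large ball since otherwise both sides of the upper bound are trivially handled). Beyond this, both directions reduce immediately to disjointness/containment assertions paired with volume monotonicity, so no further technical difficulty is involved.
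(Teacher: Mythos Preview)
The paper does not supply its own proof of this proposition; it is quoted verbatim from Vershynin's textbook and only the subsequent corollary is proved in the paper. Your argument is correct and is precisely the standard volumetric packing/covering comparison that Vershynin gives, so there is nothing to compare against here.
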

The proposition yields the following simple corollary similar to \cite[Corollary 4.2.13]{vershynin}.
\begin{corollary}
    \label{lem:cov_bnd}
    Assume the setting of \cref{prop:cov_bnd}. Additionally, let $K$ be a subset of $B^n_2$. Then, 
    \begin{equation*}
        \mc{N} (K, \norm{\cdot}_2, \eps) \leq \lprp{\frac{2}{\eps} + 1}^{n}.
    \end{equation*}
\end{corollary}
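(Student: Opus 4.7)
The plan is to invoke the upper bound in \cref{prop:cov_bnd} directly and then control the two volumes on its right-hand side by monotonicity of volume and the scaling identity $|\alpha B_2^n| = \alpha^n |B_2^n|$. Specifically, I would first observe that since $K \subseteq B_2^n$, the Minkowski sum satisfies
\begin{equation*}
    K + (\eps/2) B_2^n \subseteq B_2^n + (\eps/2) B_2^n = (1 + \eps/2) B_2^n,
\end{equation*}
where the final equality uses the convexity (in particular, star-shapedness at the origin) of $B_2^n$.

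Next, I would apply the upper bound in \cref{prop:cov_bnd} and plug in the containment above together with the volume scaling $|\alpha B_2^n| = \alpha^n |B_2^n|$, which gives
\begin{equation*}
    \mc{N}(K, \norm{\cdot}_2, \eps) \leq \frac{|K + (\eps/2) B_2^n|}{|(\eps/2) B_2^n|} \leq \frac{(1 + \eps/2)^n |B_2^n|}{(\eps/2)^n |B_2^n|} = \lprp{\frac{2}{\eps} + 1}^n.
\end{equation*}
This yields the claimed bound and completes the proof.

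The argument is essentially a one-line volume comparison, so there is no real obstacle; the only point meriting care is the Minkowski-sum containment $B_2^n + (\eps/2) B_2^n = (1 + \eps/2) B_2^n$, which relies on the fact that the unit Euclidean ball is convex and balanced about the origin, so sums of scalar multiples collapse to a single scaled ball. Everything else is a mechanical substitution into the stated proposition.
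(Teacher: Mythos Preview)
Your proposal is correct and follows essentially the same approach as the paper: both apply the upper bound from \cref{prop:cov_bnd}, use the containment $K + (\eps/2)B_2^n \subseteq (1+\eps/2)B_2^n$, and then invoke volume scaling to obtain $\lprp{\frac{2}{\eps}+1}^n$.
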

\begin{proof}
    We have from \cref{prop:cov_bnd}:
    \begin{equation*}
        \mc{N} (K, \norm{\cdot}_2, \eps) \leq \frac{\abs*{K + (\eps / 2) B^n_2}}{\abs*{(\eps / 2) B^n_2}} \leq \frac{\abs*{(1 + \eps / 2) B^n_2}}{\abs*{(\eps / 2) B^n_2}} = \frac{(1 + \eps / 2)^n}{(\eps / 2)^n} = \lprp{\frac{2}{\eps} + 1}^n.
    \end{equation*}
\end{proof}

We will also use a simple lemma concerning the spectral norm of the linear transformations considered throughout this paper. 

\begin{lemma}
    \label{lem:h_spec_bnd}
    Let $\eps, \delta \in (0, 1/2)$ and $d \in \N$. Then, we have:
    \begin{equation*}
        \forall x,y \in \R^{d}: (1 - \eps) \cdot \norm{x - y} \leq \frac{\norm*{\wt{h} (x) - \wt{h} (y)}}{\sqrt{md}} \leq (1 + \eps) \cdot \norm{x - y}
    \end{equation*}
    with probability at least $1 - \delta$ if $m \geq 4 \cdot \frac{\log d + \log 2 / \delta}{\eps^2}$. 
\end{lemma}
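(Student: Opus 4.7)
The plan is to reduce the uniform isometry statement to coordinate-wise concentration of $d$ chi-squared averages via a single algebraic identity. Since $\widetilde{h}$ is linear, $\widetilde{h}(x)-\widetilde{h}(y) = \widetilde{h}(z)$ with $z := x-y$, so it suffices to prove
\[
(1-\eps)\|z\| \;\leq\; \frac{\|\widetilde{h}(z)\|}{\sqrt{md}} \;\leq\; (1+\eps)\|z\|
\]
uniformly over all $z \in \R^d$. The key algebraic observation is that the (unnormalized) Hadamard matrix satisfies $H^{T}H = d\cdot I$, since its rows are orthogonal with norm $\sqrt{d}$ (verified by the recursion $H_d^2 = 2 \cdot \mathrm{blkdiag}(H_{d/2}^2, H_{d/2}^2)$).

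Using this identity together with the fact that each $D^j$ is diagonal,
\[
\frac{\|\widetilde{h}(z)\|^2}{md} \;=\; \frac{1}{md}\sum_{j=1}^{m} (D^j z)^{T} H^{T} H (D^j z) \;=\; \frac{1}{m}\sum_{j=1}^{m}\sum_{k=1}^{d} (D^j_{kk})^2 z_k^2 \;=\; \sum_{k=1}^{d} z_k^2 \cdot Y_k,
\]
where $Y_k := \frac{1}{m}\sum_{j=1}^{m}(D^j_{kk})^2$ is an average of $m$ i.i.d.\ $\chi^2_1$ random variables with mean $1$. The crucial point is that the uniform bound over $z$ collapses to controlling only $d$ scalar quantities $Y_1, \ldots, Y_d$, with no covering argument required: on the event $\{|Y_k - 1| \leq \eps \text{ for all } k\}$, we immediately get $(1-\eps)\|z\|^2 \leq \|\widetilde{h}(z)\|^2/(md) \leq (1+\eps)\|z\|^2$ for every $z \in \R^d$, and taking square roots together with $\sqrt{1\pm\eps} \in [1-\eps,\, 1+\eps]$ for $\eps \in (0, 1/2)$ finishes the argument (possibly after rescaling $\eps$ by a universal constant factor).

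To control each $Y_k$, the plan is to invoke a standard chi-squared tail bound (e.g.\ the Laurent--Massart inequality, or Bernstein applied to the sub-exponential $\chi^2_1$ variables) to get $\Pr[|Y_k - 1| > \eps] \leq 2\exp(-c m \eps^2)$ for some absolute constant $c$ and $\eps \in (0, 1)$, and then union bound over $k \in [d]$. The stated scaling $m \gtrsim (\log d + \log(2/\delta))/\eps^2$ then suffices; the only real bookkeeping issue is pinning down the precise constant $4$ claimed in the statement, which amounts to selecting the right form of the chi-squared tail inequality. Everything else is algebraic, and in particular no gridding over $z$ is needed because $z \mapsto \|\widetilde{h}(z)\|^2/(md)$ is exactly a positive diagonal quadratic form whose $d$ coefficients are controlled individually.
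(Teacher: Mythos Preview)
Your proposal is correct and follows essentially the same route as the paper: reduce by linearity, use $H^\top H = dI$ to write $\|\wt{h}(z)\|^2/(md) = \sum_k z_k^2 Y_k$ with $Y_k = \tfrac{1}{m}\sum_j (D^j_{kk})^2$, control the $d$ scalars $Y_k$ by concentration, and union bound over $k\in[d]$. The only cosmetic difference is that the paper applies Gaussian Lipschitz concentration (\cref{thm:tsirelson}) directly to the $1/\sqrt{m}$-Lipschitz map $V_S \mapsto \sqrt{Y_k}$, which yields the stated constant $4$ without the extra $\sqrt{1\pm\eps}$ step, whereas you bound $Y_k$ via a chi-squared tail and then take square roots.
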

\begin{proof}
    First, note that we may assume $y = 0$ due to the linearity of $\wt{h}$ and that it now suffices to show the conclusion for $x \in \S^{d - 1}$. We have from the orthogonality of $H$:
    \begin{align*}
        \norm*{\wt{h} (x)}^2 &= (\wt{h} (x))^\top \wt{h} (x) = d x^\top \sum_{j = 1}^m (D^j)^2 x = d x^\top 
        \begin{bmatrix}
            \sum_{j = 1}^m (D^j_{1,1})^2 & 0 & \cdots & 0 \\
            0 & \sum_{j = 1}^m (D^j_{2,2})^2 & \cdots & 0 \\
            \vdots & \vdots & \ddots & \vdots \\
            0 & 0 & \cdots & \sum_{j = 1}^m (D^j_{d,d})^2
        \end{bmatrix}
        x.
    \end{align*}
    Therefore, we have:
    \begin{equation*}
        \forall x \in \R^d: \min_{k} \sqrt{\frac{\sum_{j = 1}^m (D^j)_{k,k}^2}{m}} \leq \frac{\norm{\wt{h} (x)}}{\sqrt{md}} \leq \max_{k} \sqrt{\frac{\sum_{j = 1}^m (D^j)_{k,k}^2}{m}}.
    \end{equation*}
    Note that we have from \cref{cor:tsirelson_two_sided} that:
    \begin{equation*}
        \abs*{\sqrt{\frac{\sum_{j = 1}^m (D^j)_{k,k}^2}{m}} - 1} \leq \sqrt{\frac{2\cdot (\log d + \log 2 / \delta)}{m}}
    \end{equation*}
    with probability at least $1 - \delta / d$. A union bound over $k \in [d]$ and our condition on $m$ concludes the proof. 
\end{proof}



\end{document}